\newcommand{\method}{\texttt{TPC}\xspace}
\definecolor{darkgreen}{rgb}{0,0.5,0}
\definecolor{darkblue}{rgb}{0,0,0.5}
\definecolor{purple}{rgb}{1,0,1}
\definecolor{gray}{rgb}{0.5,0.5,0.5}
\newcommand{\kibitz}[2]{\ifnum\Comments=0\textcolor{#1}{#2}\fi}
\else\excludeversion{old}\fi
\newenvironment{new}{\color{cyan}}{}\else\fi
\newcommand{\additive}{additive\xspace}
\newcommand{\resolve}{composable\xspace}
\newcommand{\difresolve}{indirectly composable\xspace}
\icmltitlerunning{Transformation-Specific Smoothing for Point Cloud Models}
\begin{document}

\twocolumn[
\icmltitle{\method: Transformation-Specific Smoothing for Point Cloud Models}



\icmlsetsymbol{equal}{*}

\begin{icmlauthorlist}
\icmlauthor{Wenda Chu}{thu}
\icmlauthor{Linyi Li}{uiuc}
\icmlauthor{Bo Li}{uiuc}
\end{icmlauthorlist}

\icmlaffiliation{thu}{Institute for Interdisciplinary Information Sciences, Tsinghua University, Beijing, P. R. China~(work done
during remote internship at UIUC)}
\icmlaffiliation{uiuc}{University of Illinois Urbana-Champaign~(UIUC), Illinois, USA}
\icmlcorrespondingauthor{Wenda Chu}{chuwd19@mails.tsinghua.edu.cn}
\icmlcorrespondingauthor{Linyi Li}{linyi2@illinois.edu}
\icmlcorrespondingauthor{Bo Li}{lbo@illinois.edu}

\icmlkeywords{Machine Learning, ICML}

\vskip 0.3in
]



\printAffiliationsAndNotice{}  

\begin{abstract}
Point cloud models with neural network architectures have achieved great success  and  have been widely used in safety-critical applications, such as Lidar-based recognition systems in autonomous vehicles.
However, such models are shown to be vulnerable to adversarial attacks that aim to apply stealthy semantic transformations such as rotation and tapering to mislead model predictions.
In this paper, we propose a transformation-specific smoothing framework \method, which provides \emph{tight} and \emph{scalable}  robustness guarantees for point cloud models against semantic transformation attacks. 
We first categorize common 3D transformations into three categories: \additive~(e.g., shearing), \resolve~(e.g., rotation), and \difresolve~(e.g., tapering), and we present generic robustness certification strategies for all categories respectively. We then specify unique certification protocols for a range of specific semantic transformations and their compositions.
Extensive experiments on several common 3D transformations show that \method significantly outperforms state of the art. For example, our framework boosts the certified accuracy against twisting transformation along the $z$-axis (within $\pm20^\circ$) from $20.3\%$ to $83.8\%$. Codes and models are available at \url{https://github.com/chuwd19/Point-Cloud-Smoothing}.
\end{abstract}

\section{Introduction}
\label{Introduction}

    Deep neural networks that take point clouds data as inputs~(point cloud models) are widely used in computer vision~\citep{pointnet,wang2019dynamic,zhou2018voxelnet} and autonomous driving~\citep{li20173d,chen2017multi,chen20203d}.
    For instance, modern autonomous driving systems are equipped with LiDAR sensors that generate point cloud inputs to feed into point cloud models~\citep{cao2019adversarial}.
    Despite their successes, point cloud models are shown to be vulnerable to adversarial attacks that mislead the model's prediction by adding stealthy perturbations to point coordinates or applying semantic transformations~(e.g., rotation, shearing, tapering)~\cite{cao2019adversarial,xiang2019generating,xiao2019meshadv,fang2021invisible}.
    Specifically, semantic transformation based attacks can be easily operated on point cloud models by simply manipulating sensor positions or orientations~\cite{cao2019adversarial,cao2021invisible}.
    These attacks may lead to severe consequences such as forcing an autonomous driving vehicle to steer toward the cliff~\cite{pei2017deepxplore}. 
    A wide range of empirical defenses against these attacks has been studied~\cite{zhu2017robust,aoki2019pointnetlk,sun2020towards,Sun2020On,sun2021adversarially}, while defenses with robustness guarantees are less explored~\cite{DeepG3D,Marc2021} and provides loose and less scalable certification.
    
    \begin{figure}[!t]
        \centering
        \includegraphics[width=\linewidth]{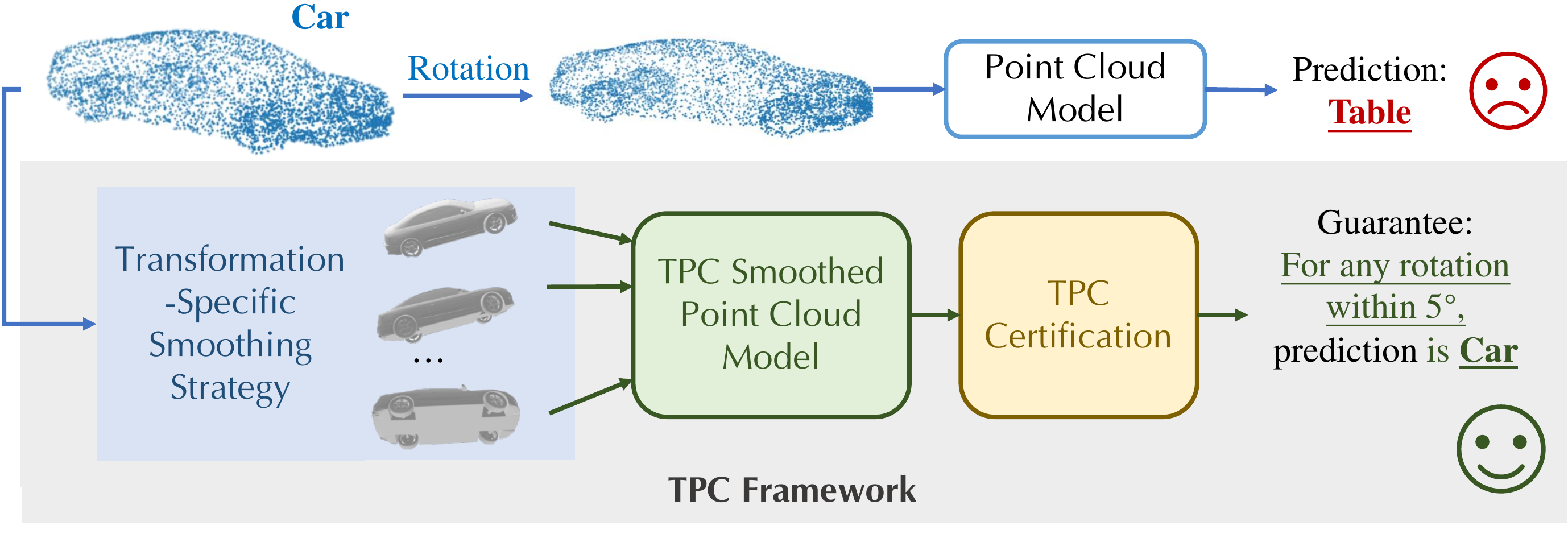}
        \vspace{-2em}
        \caption{Overview of \method framework. \method includes smoothing and certification strategies to provide certified robustness for point cloud models against semantic transformations.
        Besides the rotation as shown in the figure, \method provides strong robustness certification for a wide range of other semantic transformations.}
        \label{fig:1}
        \vspace{-1.5em}
    \end{figure}
    
    In this paper, we propose a transformation-specific smoothing framework \method that provides \textit{tight} and \textit{scalable} \textit{probabilistic} robustness guarantees for point cloud models against a wide range of semantic transformation attacks.
    We first categorize common semantic transformations into three categories: \additive~(e.g., shearing), \resolve~(e.g., rotation), and \difresolve~(e.g., tapering).
    For each category, our framework proposes novel \emph{smoothing} and \emph{robustness certification} strategies.
    With \method, for each common semantic transformation or composition, we prove the corresponding robustness conditions that yield efficient and tight robustness certification.
    
    For example, regarding general rotation based attacks, we first prove that it is a type of \textit{\resolve} transformations; we then propose a corresponding smoothing strategy and  certify the robustness of the smoothed model with a novel sampling-based algorithm, which is shown to have sound and tight input-dependent sampling error bound. \method achieves $69.2\%$ certified robust accuracy for any rotation within $10^\circ$. To the best of our knowledge, no prior work can provide robustness certification for rotations within such large angles. 
    
    In addition to our theoretical analysis for the certification against different types of semantic transformations, we conduct extensive experiments to evaluate \method.
    Compared with existing baselines, our \method achieves \emph{substantially} higher certified robust accuracy.
    For example, for any twisting along $z$-axis within $\pm 20^\circ$, we achieve the probabilistic certified accuracy of $83.8\%$ with a high confidence level $99.9\%$, while the existing baseline~\citep{DeepG3D} provides a deterministic certified accuracy of $20.3\%$.
    Furthermore, compared with prior works, we show that \method can: (1)~certify a more general class of semantic transformations; (2)~certify large-size point clouds; and (3)~certify under large perturbation magnitudes.
    We also show that \method can certify the robustness for multiple tasks on 3D point clouds, including classification and part segmentation.
    
    We illustrate our \method framework in \Cref{fig:1}, and we summarize the main technical \textbf{contributions} as follows.
    \begin{itemize}[leftmargin=*]
    \setlength{\parskip}{0pt}
    \vspace{-0.1in}
        \item We propose a general robustness certification framework \method for point cloud models. 
        We categorize common semantic transformations of point clouds into three categories: \additive, \resolve, and \difresolve, and provide general smoothing and certification strategies for each.
        
        \item We concretize our framework \method to provide transformation-specific smoothing and certification for various realistic common semantic transformations for point clouds, including rotation, shearing, twisting, and tapering as well as their compositions.
        
        \item We conduct extensive experiments and show that \method (1) achieves significantly higher certified robust accuracy than baselines, (2)  provides certification for large-size point clouds and large perturbation magnitudes, (3) provides efficient and effective certification for different tasks such as classification and part segmentation.
    \end{itemize}
    
    \subsection*{Related Work}
    
    \textbf{Certified Robustness of Deep Neural Networks.}
    To mitigate the threats of adversarial attacks on deep neural networks~\citep{szegedy2013intriguing,tramer2020adaptive,eykholt2018robust,qiu2020semanticadv,li2020qeba,zhang2021progressive,li2020nolinear,xiao2018characterizing}, efforts have been made toward certifying and improving the certified robustness of DNNs~\citep{cohen19c,li2020sok,li2019robustra}.
    Existing works mainly focus on image classification models against $\ell_p$ bounded perturbations.
    For such threat models, the robustness certification can be roughly divided into two types: deterministic and probabilistic,
    where deterministic methods are mainly based on feasible region relaxation~\citep{wong2018provable,weng2018towards,zhang2018efficient}, abstract interpretation~\citep{mirman2018differentiable,DeepPoly}, or Lipschitz bounds~\citep{tsuzuku2018lipschitz,zhang2021towards};
    and probabilistic methods provide certification that holds with high probability, and they are mainly based on randomized smoothing~\citep{cohen19c,yang2020randomized}.
    Along with the certification methods, there are several robust training methods that aim to train DNNs to be more certifiably robust~\citep{wong2018scaling,li2019robustra,salman2019provably}. 
    
    \textbf{Semantic Transformation Attacks and Certified Robustness on Point Cloud Models.}
    Our \method aims to generalize the model robustness certification to point cloud models against a more generic family of practical attacks -- semantic transformation attacks.
    The semantic transformation attacks have been shown feasible for both image classification models and point cloud models~\citep{hendrycks2018benchmarking,cao2019adversarial,xiang2019generating}, and certified robustness against such attacks is mainly studied for 2D image classification models~\citep{balunovic2019certifying,fischer2020certified,TSS}.
    For point cloud models,
    some work considers point \textit{addition} and \textit{removal} attacks~\citep{xiang2019generating} and provides robustness certification against such attacks~\citep{liu2021pointguard}. The randomized smoothing technique is applied to certify point cloud models on segmentation tasks by~\citep{Marc2021}. However, their certification only covers points edition with bounded $\ell_2$ norm and rotations along a fixed axis.
    For general semantic transformation attacks,
    to the best of our knowledge, the only work that can provide robustness certification against them is DeepG3D~\citep{DeepG3D}, which is based on linear bound relaxations.
    In this work, we derive novel randomized smoothing techniques on point clouds models to provide probabilistic robustness certification against semantic transformations.
    In \Cref{sec:exp}, we conduct extensive experiments to show that our framework is more general and provides significantly higher certified robust accuracy than DeepG3D under different settings.

\section{Semantic Transformation Attacks on Point Cloud Models}

We denote the space of point cloud inputs as $\mathcal X = \mathbb R^{N\times 3}$ where $N$ is the number of points the point cloud has. A point cloud with $N$ points is denoted by $x = \{p_i\}_{i=1}^N$ with $p_i \in \mathbb R^3$. Unless otherwise noted, we assume all point cloud inputs are normalized to be within a unit ball, i.e., $\|p_i\|_2 \leq 1$. 
We mainly consider classification tasks on the point clouds level.
Such classification task is defined with a set of labels $\mathcal Y = \{1,\dots, C\}$ and a classifier is defined by a deterministic function $h: \mathcal X \to \mathcal Y$.
More extensions are in \Cref{part-segmentation}.

\subsection{Semantic Transformations}

Semantic transformations on point cloud models are defined as functions $\phi: \mathcal X \times \mathcal Z \to \mathcal X$ where $\mathcal Z$ is the parameter space for transformations. The semantic transformations discussed in this paper may change the three-dimensional coordinate of each point (usually in a point-wise manner) but do not increase or decrease the number of points. 
In \Cref{section:3}, we will further categorize different semantic transformations based on their intrinsic properties.

\subsection{Threat Model and Certification Goal}
We consider semantic transformation attacks that an adversary can apply arbitrary semantic transformations to the point cloud data according to a parameter $z\in \mathcal Z$. The adversary then performs evasion attacks to a classifier $h$ with the transformed point cloud $\phi(x,z)$. The attack is successful if $h$ predicts different labels on $x$ and $\phi(x,z)$\footnote{Without loss of generality, we consider untargeted attacks here, and the targeted attack can be derived similarly.}.

The main goal of this paper is to certify the robustness of point cloud classifiers against all semantic attacks within a certain transformation parameter space. Formally, our \textbf{certification goal} is to find a subset $\mathcal Z_{\mathrm{robust}}\subseteq Z$ for a classifier $h:\mathcal X \to \mathcal Y$, such that
\vspace{-0.05in}
\begin{equation}
    h(x) = h(\phi(x,z)), \forall z \in \mathcal Z_{\mathrm{robust}}.
    \label{eq:robustness}
\end{equation}
\section{Transformation Specific Smoothing for Point Cloud Models}\label{section:3}

In this section, we first introduce the proposed randomized smoothing techniques for general semantic transformations. Next, we categorize the semantic transformations into three types: \resolve, additive, and \difresolve transformations. We then derive the smoothing-based certification strategies for each type.

\begin{figure}[tb]
    \centering
    \begin{subfigure}[b]{0.45\linewidth}
    \centering
    \includegraphics[width=\textwidth]{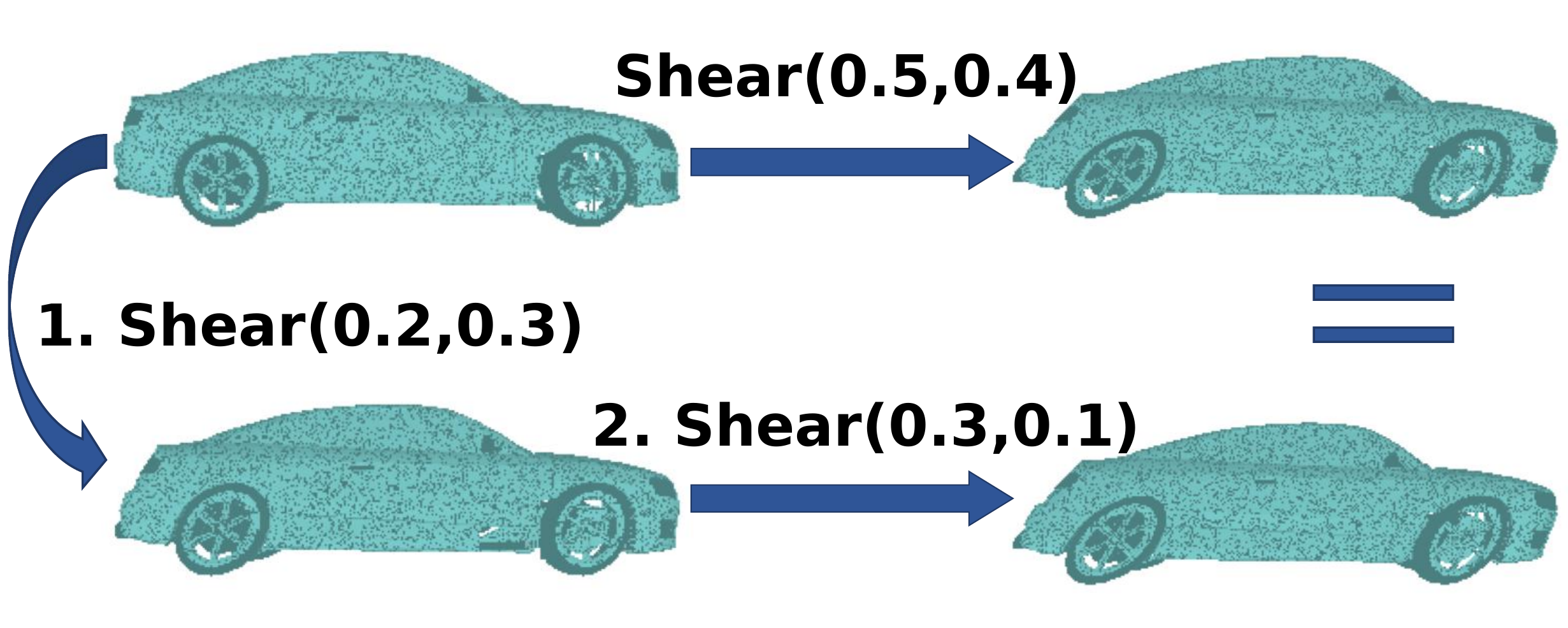}
    \caption{Additive}
    \end{subfigure}
    \begin{subfigure}[b]{0.45\linewidth}
    \centering
    \includegraphics[width=\textwidth]{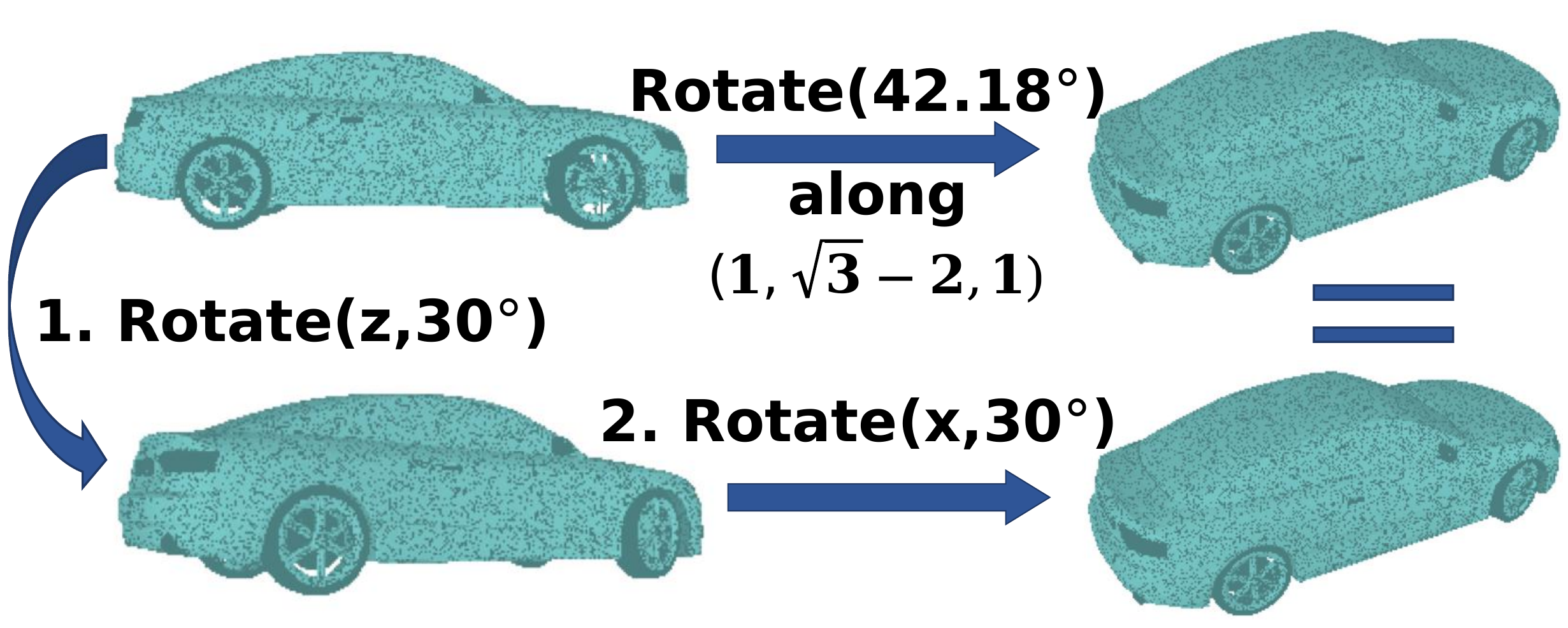}
    \caption{Composable
    }
    \end{subfigure}
    \begin{subfigure}[b]{0.45\linewidth}
    \centering
    \includegraphics[width=\textwidth]{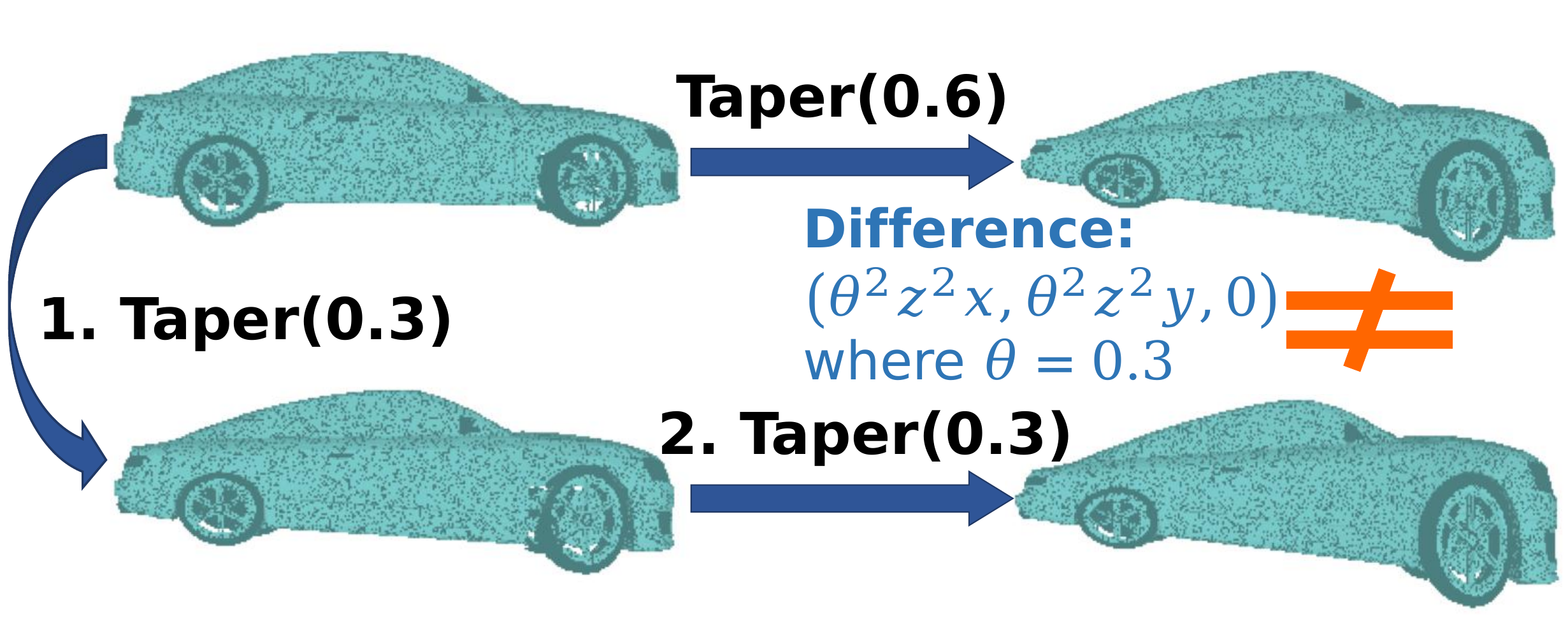}
    \caption{Indirectly composable}
    \end{subfigure}
    \vspace{-0.1in}
    \caption{Illustration of different types of transformations. (a) additive transformations (e.g., shearing), (b) \resolve transformations (e.g., rotation) and (c) \difresolve transformations (e.g., tapering).}
    \label{fig:2}
    \vspace{-0.1in}
    \label{fig-transformation}
\end{figure}

\begin{figure}[!t]
    \begin{center}
    \centerline{\includegraphics[width = .9 \linewidth]{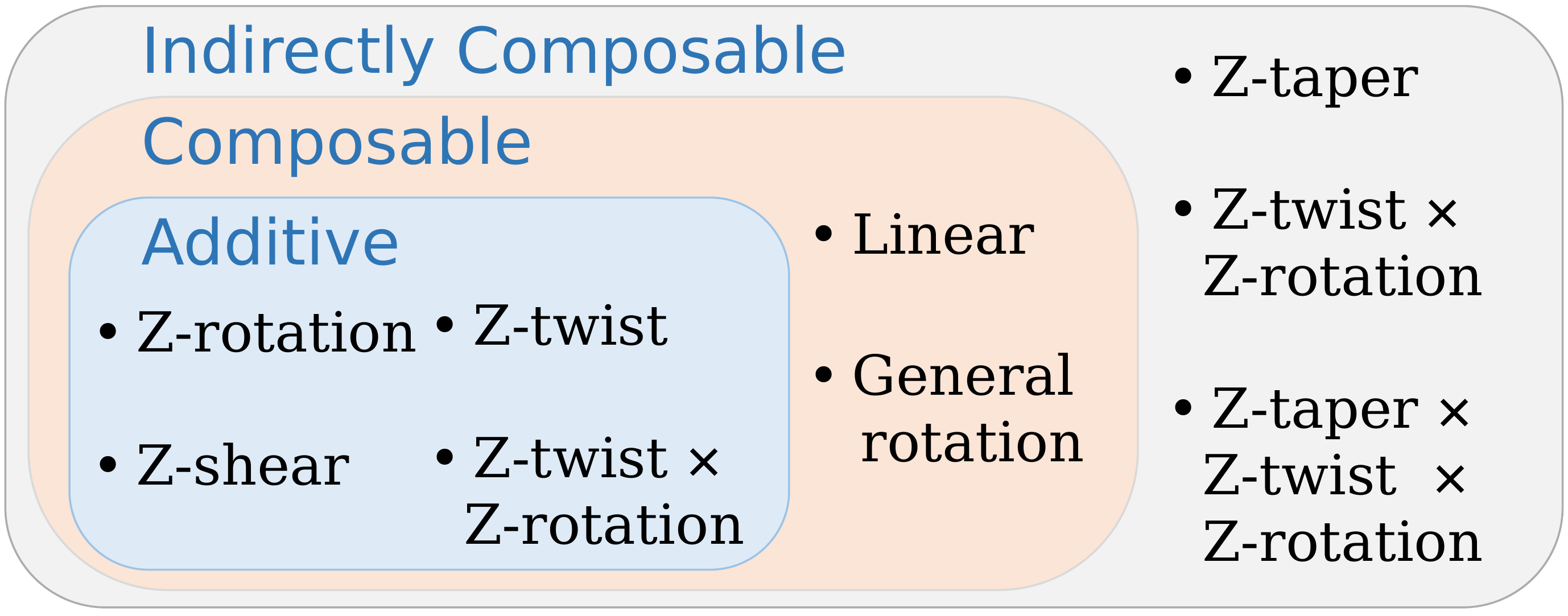}}
    \vspace{-0.05in}
    \caption{Taxonomy of common 3D semantic transformations for point clouds.}
    \label{fig-category}
    \end{center}
    \vspace{-0.4in}
\end{figure}
\subsection{Transformation Specific Smoothed Classifier}

We apply transformation-specific smoothing to an arbitrary base classifier $h:\mathcal X \to \mathcal Y$ to construct a smoothed classifier. Specifically, the smoothed classifier $g$ predicts the class with the highest conditional probability when the input $x$ is perturbed by some random transformations.
\begin{definition}[Transformation Specific Smoothed Classifier]
Let $\phi:\mathcal X \times \mathcal Z \to \mathcal X$ be a semantic transformation. Let $\epsilon$ be a random variable in the parameter space $\mathcal Z$. Suppose we have a base classifier that learns a conditional probability distribution, $h(x) = \arg\max_{y\in \mathcal Y} p(y|x)$. Applying transformation specific smoothing to the base classifier $h$ yields a smoothed classifier $g:\mathcal X \to\mathcal Z$, which predicts
\begin{equation}
\small
g(x;\epsilon) = \arg\max_{y\in \mathcal Y} q(y|x,\epsilon) = \arg\max_{y\in \mathcal Y} \mathbb E_{\epsilon}(p(y|\phi(x,\epsilon))).
\end{equation}
\end{definition}

\vspace{-0.05in}
We recall the theorem proved  \cite{TSS} in \cref{appendix:generic}, which provides a generic certification bound for the transformation-specific smoothed classifier based on the Neyman-Pearson lemma~\cite{neyman1933ix}. 

Next, we will categorize the semantic transformations into different categories based on their intrinsic properties as shown in Figure~\ref{fig-category}, and we will then discuss the certification principles for each specific category. 

\subsection{Composable Transformations}
\label{resolvable}

A set of semantic transformations is called \resolve if it is closed under composition.
\begin{definition}
\label{def:composable}
A set of semantic transformations defined by $\phi:\mathcal X\times \mathcal Z\to \mathcal X$ is called \textbf{\resolve} if for any $\alpha \in \mathcal Z$ there exists an injective and continuously differentiable function $\gamma_\alpha: \mathcal Z\to \mathcal Z$ with non-vanishing Jacobian, such that
\vspace{-0.05in}
\begin{equation}
\label{eq:composable}
    \phi(\phi(x,\alpha),\beta) = \phi(x,\gamma_\alpha(\beta)),\ \forall x\in \mathcal X, \beta \in \mathcal Z.
\end{equation}
\end{definition}

\vspace{-0.1in}
Common semantic transformations for point cloud data that are \resolve include rotation, shearing along a fixed axis, and twisting along a fixed axis. For example, according to Euler's rotation theorem, we can always find another rotation $\gamma_\alpha(\beta)\in \mathcal Z$ for any two rotations $\alpha, \beta\in\mathcal Z$.  Therefore, rotations belong to the \resolve transformations as shown in Figure~\ref{fig-transformation} (b).

In general, \resolve transformations can be certified against by \cref{thm:generic} stated in \cref{appendix:generic}.  For a classifier $g(x;\epsilon_0)$ smoothed by the \resolve transformation, we can simply replace the random variable $\epsilon_1$ by $\gamma_{\alpha}(\epsilon)$ in \cref{thm:generic} to derive a robustness certification condition. However, some \resolve transformations with complicated $\gamma_\alpha(\beta)$ function result in intractable distribution for $\epsilon_1$, causing difficulties for the certification. Therefore, we focus on a subset of \resolve transformations, called \emph{additive transformations}, for which it is straight-forward to certify by applying \cref{thm:generic}.



\subsection{Additive Transformations}
We are particularly interested in a subset of \resolve transformations that the function $\gamma_\alpha:\mathcal Z \to \mathcal Z$ defined in \cref{def:composable} satisfies $\gamma_\alpha(\beta) = \alpha + \beta$ as shown in Figure~\ref{fig-transformation}~(a) where the one step rotation above is equivalent to the two step transformations below. 
\begin{definition}
A set of semantic transformations $\phi:\mathcal X\times \mathcal Z \to \mathcal X$ is called \textbf{additive} if
\vspace{-0.05in}
\begin{equation}
    \phi(\phi(x,\alpha),\beta) = \phi(x,\alpha + \beta),\ \forall\ x\in \mathcal X, \alpha, \beta\in \mathcal Z.
\end{equation}
\end{definition}
\vspace{-0.1in}
An additive transformation must be \resolve, but the reverse direction does not hold. For instance, the set of general rotations from the SO(3) group is \resolve, but not additive. Rotating 10$^\circ$ along the $x$ axis first and then 10$^\circ$ along the $y$ axis does not equal rotating 20$^\circ$ along the $xy$ axis. Thus, general rotations cannot be categorized as an additive transformation. However, rotating along any fixed axis is additive. Based on this observation, we discuss z-rotation (i.e., rotation along $z$ axis) and general rotations separately in \cref{section:4}.

All additive transformations can be certified  following  the same protocol derived from \cref{thm:generic}. We omit \cref{additive+gaussian} for certified robustness against additive transformation in \cref{appendix:generic}.

\subsection{Indirectly Composable Transformations}
\label{differentially_resolvable}

As shown in \cref{resolvable}, \resolve transformations can be certified following \cref{thm:generic}. However, some semantic transformations of point clouds do not have such closure property under composition and thus do not fall in this category as shown in Figure~\ref{fig-transformation} (c). For example, the tapering transformation which we will discuss in \cref{section:4} is not \resolve and cannot be certified  directly using \cref{thm:generic}. This kind of transformation is therefore categorized as a more general class called \difresolve transformations.
\begin{definition}
A set of transformations $\phi: \mathcal X \times \mathcal Z_{\phi} \to \mathcal X$ is \textbf{\difresolve} if there is a set of \resolve transformations $\psi:\mathcal X\times \mathcal Z_\psi \to \mathcal X$, such that for any $x\in \mathcal X$, there exists a function $\delta_x: \mathcal Z_\phi \times \mathcal Z_\phi \to \mathcal Z_\psi$ with
\vspace{-0.05in}
\begin{equation}
    \psi(\phi(x,\alpha), \delta_x(\alpha,\beta))
    =
    \phi(x,\beta), \ \forall \alpha,\beta\in \mathcal Z_\phi.
\end{equation}
\end{definition}

\vspace{-0.1in}
This definition involves more kinds of transformations, since we can choose the transformation $\psi$ as $\psi(x,\delta) = x + \delta$ and let $\delta_x(\alpha,\beta) = 
\phi(x,\beta) -\phi(x,\alpha)
$. This specific assignment of $\psi$ leads to a useful theorem \cite{TSS} in \cref{appendix:TSS-DR}, which we use to certify against some more complicated transformations, such as tapering in \cref{section:4}. The theorem states that the overall robustness can be guaranteed if we draw multiple samples within the parameter space and certify the neighboring distribution of each sampled parameter separately.


\section{Certifying Point Cloud Models against Specific Semantic Transformations}
\label{section:4}

In this section, we certify the point cloud models against several specific semantic transformations that are commonly seen for point cloud data, including rotation, shearing, twisting, and tapering. We do not analyze scaling and translation, since the point cloud models are usually inherently invariant to them due to the standard pre-processing pipeline \cite{pointnet}. For each transformation, we specify a corresponding certification protocol based on the categorization they belong to introduced in \cref{section:3}.

\subsection{Rotation, Shearing, and Twisting along a Fixed Axis}

Rotation, shearing, and twisting are all common 3D transformations that are performed pointwise on point clouds. Without loss of generality, we consider performing these transformations along the $z$-axis.

Specifically, we define \textbf{z-shear} transformation as $\phi_{Sz}: \mathcal X\times \mathcal Z \to \mathcal X$ where $\mathcal X = \mathbb R^{N\times 3}$ is the space of the point clouds with $N$ points and $\mathcal Z = \mathbb R^2$ is the parameter space. For any $z = (\theta_1,\theta_2)$, z-shear acting on a point cloud $x\in \mathcal X$ with $x = \{p_i\}_{i=1}^N$ yields ($p_i = (x_i,y_i,z_i)^T$)
\vspace{-0.03in}
\begin{equation}
    \phi_{Sz}(p_i, z) = 
    (x_i + \theta_1 z_i, y_i + \theta_2 z_i, z_i).
\end{equation}

\vspace{-0.1in}
\textbf{Z-twist} transformation $\phi_{Tz}:\mathcal X \times \mathcal Z \to \mathcal X$ is defined similarly but with parameter space $\mathcal Z = \mathbb R$. For any $\theta\in Z$ and $p_i = (x_i,y_i,z_i)^T$,
\vspace{-0.03in}
\begin{equation}
\small
    \phi_{Tz}(p_i, \theta) = \begin{pmatrix}
    x_i\cos(\theta z_i) - y_i \sin(\theta z_i)\\
    x_i \sin(\theta z_i) + y_i\cos(\theta z_i)\\
    z_i
    \end{pmatrix}.
\end{equation}
Note that z-rotation, z-shear and z-twist are all \emph{additive} transformations. Hence, we present the following corollary based on \cref{additive+gaussian}, which certifies the robustness of point clouds models with bounded $\ell_2$ norm for the transformation parameters.
\begin{corollary}
Suppose a classifier $g:\mathcal X\to \mathcal Y$ is smoothed by a transformation $\phi:\mathcal X \times \mathcal Z \to \mathcal X$ with $\epsilon \sim \mathcal N(0,\sigma^2\mathds 1_d)$. Assume its class probability satisfies \cref{class_probability}. If the transformation is z-rotation, z-shear, or z-twist  ($\phi = \phi_{Sz}, \phi_{Tz}$ or $\phi_{Rot-z}$), then it is guaranteed that $g(\phi(x,\alpha);\epsilon) = g(x;\epsilon)$, if the following condition holds:
\begin{equation}
    \|\alpha\|_2 \leq \frac{\sigma}{2}\Big(\Phi^{-1}(p_A) - \Phi^{-1}(p_B)\Big), \alpha \in \mathcal Z.
\end{equation}
\end{corollary}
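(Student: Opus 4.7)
The plan is to reduce the claim to the generic Gaussian-smoothing bound for additive transformations (the Cohen--type bound stated in \cref{additive+gaussian}) by verifying, for each of the three transformations in question, that it is indeed additive in its parameter. Since the paper has already observed that additivity in parameter space lets us transfer the Neyman--Pearson argument from input-space $\ell_2$ balls to parameter-space $\ell_2$ balls, once additivity is established the stated radius $\tfrac{\sigma}{2}(\Phi^{-1}(p_A)-\Phi^{-1}(p_B))$ follows immediately.

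First I would verify additivity pointwise for each transformation. For $\phi_{Sz}$ with parameters $\theta=(\theta_1,\theta_2)$ and $\theta'=(\theta_1',\theta_2')$, applying the shear twice just adds a contribution $(\theta_1+\theta_1')z_i$ to $x_i$ and $(\theta_2+\theta_2')z_i$ to $y_i$ while leaving $z_i$ unchanged, so $\phi_{Sz}(\phi_{Sz}(p_i,\theta),\theta') = \phi_{Sz}(p_i,\theta+\theta')$. For $\phi_{Tz}$, the key observation is that $z_i$ is invariant under the twist, so the second twist by $\theta'$ rotates the $(x,y)$ coordinates by the additional angle $\theta' z_i$, giving a total rotation angle of $(\theta+\theta')z_i$; hence $\phi_{Tz}(\phi_{Tz}(p_i,\theta),\theta') = \phi_{Tz}(p_i,\theta+\theta')$. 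For $\phi_{Rot\text{-}z}$, rotations about a single fixed axis compose by adding angles, which is Euler's observation restricted to a one-parameter subgroup of $SO(3)$.

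Next I would invoke \cref{additive+gaussian} from the appendix. Because $\epsilon \sim \mathcal{N}(0,\sigma^2 \mathds{1}_d)$ and the transformation is additive, the smoothed prediction at the attacked input unfolds as
\begin{equation*}
g(\phi(x,\alpha);\epsilon) = \arg\max_{y\in \mathcal Y} \mathbb E_{\epsilon}\bigl(p(y\mid \phi(\phi(x,\alpha),\epsilon))\bigr) = \arg\max_{y\in \mathcal Y} \mathbb E_{\epsilon}\bigl(p(y\mid \phi(x,\alpha+\epsilon))\bigr),
\end{equation*}
so that a shift of $\alpha$ in input through the transformation is equivalent to a translation of the Gaussian noise by $\alpha$ in parameter space. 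The standard Neyman--Pearson likelihood-ratio argument between $\mathcal{N}(0,\sigma^2\mathds 1_d)$ and $\mathcal{N}(\alpha,\sigma^2 \mathds 1_d)$ then yields that the top class is preserved whenever $\|\alpha\|_2 \le \tfrac{\sigma}{2}(\Phi^{-1}(p_A)-\Phi^{-1}(p_B))$, which is exactly the claimed bound.

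The only real content is confirming additivity of the three specific maps; the rest is a citation of \cref{additive+gaussian}. The mildest potential subtlety is making sure the formula for $\phi_{Tz}$ really is additive and not merely composable in the broader sense of \cref{def:composable} (i.e., checking that $\gamma_\alpha(\beta)=\alpha+\beta$ and not some nontrivial $\gamma$), but the invariance of $z_i$ under z-twist makes this immediate. No additional care is needed for the dimensionality of $\mathcal Z$ (which is $1$ for z-rotation and z-twist, and $2$ for z-shear): the Gaussian smoothing bound applies in any finite-dimensional $\mathbb R^d$, so instantiating $d=1$ or $d=2$ in \cref{additive+gaussian} closes the argument.
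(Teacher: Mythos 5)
Your argument is correct and matches the paper's own reasoning: the paper simply observes that z-rotation, z-shear, and z-twist are additive (in each case because the $z$-coordinate is left unchanged, so the parameter-dependent increments in the other coordinates just add) and then specializes \cref{additive+gaussian} with $\sigma_1=\cdots=\sigma_m=\sigma$, which is exactly what you do. Your pointwise verification of additivity and the reduction to a translated Gaussian in parameter space is the correct and intended proof.
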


\subsection{Tapering along a Fixed Axis}

Tapering a point keeps the coordinate of a specific axis $k$, but scales the coordinates of other axes proportional to $k$'s coordinate. For clarity, we define \textbf{z-taper} transformation $\phi_{TP}:\mathcal X\times \mathcal Z \to \mathcal X$ as tapering along the $z$-axis, with its parameter space defined by $\mathcal Z = \mathbb R$. For any point cloud $x = \{p_i\}_{i=1}^N\in \mathcal X$ ($p_i = (x_i,y_i,z_i)$) and for any $\theta \in \mathcal Z$,
\vspace{-0.05in}
\begin{equation}
    \phi_{TP}(p_i,\theta) =
    \big(x_i (1+\theta z_i),
    y_i (1+\theta z_i),
    z_i\big).
    \label{eq:z-taper}
\end{equation}

\vspace{-0.15in}
However, z-taper is not a \resolve transformation, since the composition of two z-taper transformations contains terms with $z_i^2$ component. Therefore, we propose a specific certification protocol for z-taper based on \cref{thm:TSS-DR}. To achieve this goal, we specify a sampling strategy in the parameter space $\mathcal Z$ and bound the interpolation error (\cref{eq:interpolation}) of the sampled z-taper transformations.
\begin{theorem}
Let $\phi_{TP}: \mathcal X \times \mathbb R \to \mathcal X$ be a z-taper transformation. Let $g: \mathcal X \to \mathcal Y$ be a $\epsilon$-smoothed classifier with random noises $\epsilon \sim \mathcal N(0,\sigma^2 \mathds 1_{3\times N})$, which predicts $g(x;\epsilon) = \arg\max_y q(y|x;\epsilon) = \arg\max\mathbb E_\epsilon p(y| x+\epsilon)$.
\label{thm:taper}
Let $\{\theta_j\}_{j=0}^M$ be a set of transformation parameters and $\theta_j = (\frac{2j}{M} - 1)R$. Suppose for any $i$,
\vspace{-0.05in}
\begin{equation}
\small
    q(y_A|\phi_{TP}(x,\theta_j);\epsilon) \geq p_A^{(j)} > p_B^{(j)} \geq \max_{y\neq y_A} q(y|\phi_{TP}(x,\theta_j); \epsilon)
\end{equation}
Then it is guaranteed that $\forall \theta\in [-R,R]$: $y_A = \arg\max_y q(y|\phi_{TP}(x,\theta); \epsilon)$ if for all $j = 1,\dots,M$, 
\vspace{-0.05in}
\begin{equation}
\small
    \frac{\sigma}{2} \left(\Phi^{-1} \left(p_A^{(j)}\right) - \Phi^{-1}\left(p_B^{(j)}\right)\right) \geq  \frac{R\sqrt{N}}{2M}.
\end{equation}
\end{theorem}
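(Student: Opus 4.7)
The plan is to recognize z-taper as an \difresolve transformation whose resolving family $\psi$ is chosen to be additive Gaussian noise, so that the theorem reduces to: (i) sample a grid of parameters densely enough in $[-R,R]$, (ii) bound the point-cloud $\ell_2$ distance between any taper $\phi_{TP}(x,\theta)$ and the nearest sampled taper $\phi_{TP}(x,\theta_j)$, (iii) at each grid point invoke the standard Gaussian randomized-smoothing certificate to cover a ball of $\ell_2$-radius $\tfrac{\sigma}{2}(\Phi^{-1}(p_A^{(j)})-\Phi^{-1}(p_B^{(j)}))$ in input space, and (iv) glue these balls together via \cref{thm:TSS-DR} from \cref{appendix:TSS-DR}.

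First I would instantiate the \difresolve framework by taking $\psi(x,\delta) = x + \delta$ and $\delta_x(\alpha,\beta) = \phi_{TP}(x,\beta) - \phi_{TP}(x,\alpha)$, which trivially satisfies the defining identity. The grid $\theta_j = (2j/M - 1)R$, $j=0,\dots,M$, has consecutive spacing $2R/M$, so for every $\theta\in[-R,R]$ there exists some $j$ with $|\theta-\theta_j|\le R/M$. This reduces the problem to controlling the interpolation error $\|\phi_{TP}(x,\theta)-\phi_{TP}(x,\theta_j)\|_2$ uniformly.

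Next I would compute this error explicitly. For each point $p_i=(x_i,y_i,z_i)$, the only displacement is in the first two coordinates:
\begin{equation}
\phi_{TP}(p_i,\theta) - \phi_{TP}(p_i,\theta_j) = (\theta-\theta_j)\,(x_i z_i,\ y_i z_i,\ 0),
\end{equation}
so $\|\phi_{TP}(x,\theta)-\phi_{TP}(x,\theta_j)\|_2^2 = (\theta-\theta_j)^2 \sum_{i=1}^N z_i^2 (x_i^2+y_i^2)$. Using the normalization $\|p_i\|_2\le 1$ together with AM-GM on $a=z_i^2$ and $b=x_i^2+y_i^2$ (which satisfy $a+b\le 1$) gives $z_i^2(x_i^2+y_i^2)\le 1/4$ termwise, hence
\begin{equation}
\|\phi_{TP}(x,\theta)-\phi_{TP}(x,\theta_j)\|_2 \le \frac{|\theta-\theta_j|\sqrt{N}}{2} \le \frac{R\sqrt{N}}{2M}.
\end{equation}

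Finally, at each sampled parameter $\theta_j$, the Gaussian-smoothed classifier $g(\,\cdot\,;\epsilon)$ enjoys the Cohen et al. $\ell_2$ certificate with radius $\tfrac{\sigma}{2}(\Phi^{-1}(p_A^{(j)})-\Phi^{-1}(p_B^{(j)}))$ around $\phi_{TP}(x,\theta_j)$ (this is exactly \cref{additive+gaussian} applied to $\psi$). The hypothesis of the theorem says precisely that this radius dominates the worst-case interpolation distance $R\sqrt{N}/(2M)$ for every $j$, so for any $\theta\in[-R,R]$ the point $\phi_{TP}(x,\theta)$ lies inside the certified Gaussian ball around some $\phi_{TP}(x,\theta_j)$, yielding $\arg\max_y q(y|\phi_{TP}(x,\theta);\epsilon) = y_A$. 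Invoking \cref{thm:TSS-DR} formalizes this gluing.

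The main obstacle is the interpolation-error bound: getting the clean constant $\sqrt{N}/2$ rather than a looser $\sqrt{N}$ relies on the AM-GM observation that $z_i^2$ and $x_i^2+y_i^2$ must share a unit budget, which is where the unit-ball normalization is essential. Everything else is either a change of variable or a direct quotation of the additive-Gaussian certificate and the \difresolve framework.
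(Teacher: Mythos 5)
Your proposal is correct and follows essentially the same route as the paper's own proof: compute the pointwise displacement $\phi_{TP}(p_i,\theta)-\phi_{TP}(p_i,\theta_j) = (\theta-\theta_j)(x_i z_i, y_i z_i, 0)$, bound $z_i^2(x_i^2+y_i^2)\le 1/4$ via AM-GM under the unit-ball normalization, observe that the grid has covering radius $R/M$, and glue the per-sample Gaussian certificates via \cref{thm:TSS-DR}. The only cosmetic difference is that you explicitly name the $\psi$, $\delta_x$ instantiation of the indirectly composable framework, which the paper leaves implicit.
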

\vspace{-0.8em}
Detailed proof for \cref{thm:taper} can be found in \cref{appendix:taper}.

\subsection{General Rotation}
\label{general-rotation-section}
Rotation is one of the most common transformations for point cloud data. Therefore, we hope the classifier is robust not only against rotation attacks along a fixed axis, but also those along arbitrary axes. In this section, we first define general rotation and show its universality for rotations as well as their composition; and then provide a concrete certification protocol for smoothing and certifying the robustness against this type of transformation.

We define \textbf{general rotation} transformations as $\phi_R: \mathcal X \times \mathcal Z \to \mathcal X$ where $\mathcal Z = S^2 \times \mathbb R^+$ is the parameter space of rotations. For a rotation $z\in \mathcal Z$, its rotation axis is defined by a unit vector $k\in S^2$ and its rotation angle is $\theta \in \mathbb R^+$. For any 3D point $p_i\in \mathbb R^3$,
\vspace{-0.05in}
\begin{equation}
    \phi_R(p_i, z) = Rot(k,\theta) p_i,\ z = (k,\theta).
    \label{eq:general_rotation}
\end{equation}

\vspace{-0.1in}
where $Rot(k,\theta)$ is the rotation matrix that rotates by $\theta$ along axis $k$. General rotations are \emph{\resolve} transformations since the composition of any two 3D rotations can be expressed by another 3D rotation. 

However, certifying against the general rotation is more challenging, since the general rotation is not additive and the expression of their composition is extremely complicated. In particular, if we smooth a base classifier with a random variable $\epsilon_0$, a semantic attack with parameter $\alpha\in \mathcal Z$ results in $\phi_R(\phi_R(x,\alpha), \epsilon_0) = \gamma_\alpha(\epsilon_0)$, which is a bizarre distribution in the parameter space. Therefore, we cannot directly apply \cref{thm:generic} to certify general rotation.


On the other hand, as \cref{thm:TSS-DR} shows, if we uniformly sample many parameters in a subspace of $\mathcal Z = S^2 \times \mathbb R^+$ and certify robustness in the neighborhood of each sample, we are able to certify a large and continuous subspace $\mathcal Z_{\mathrm{robust}} \subseteq \mathcal Z$. As a result, we propose a sampling-based certification strategy, together with a tight bound for the interpolation error of general rotation transformations, which we summarize in the following theorem.

\begin{theorem}
\label{thm:general_rotation}
Let $\phi_{R}: \mathcal X \times \mathcal Z \to \mathcal X$ be a general rotation transformation. Let $g: \mathcal X \to \mathcal Y$ be a classifier smoothed by random noises $\epsilon \sim \mathcal N (0,\sigma^2 \mathds 1_{3\times N})$, which predicts $g(x; \epsilon) = \arg\max_y q(y|x;\epsilon) = \arg\max_y \mathbb E(p(y|x+\epsilon))$.
Let $\{z_j\}_{j=1}^M$ be a set of transformation parameters with $z_j = (k_j, \theta_j), k_j \in S^2, \theta_j \in \mathbb R^+$ such that
\vspace{-0.05in}
\begin{equation}
\label{eq:rotation_condition}
\small
    \forall k\in S^2, \theta \in [0,R], \exists k_j, 
    \theta_j \,\mathrm{s.t.}\, \langle k, k_j\rangle \leq \epsilon, |\theta - \theta_j| \leq \delta
\end{equation}

\vspace{-0.13in}
Suppose for any $j$, the smoothed classifier $g$ has class probabilities that satisfy
\vspace{-0.05in}
\begin{equation}
\small
    q(y_A|\phi_R(x,z_j); \epsilon) \geq p_A^{(j)} > p_B^{(j)}\geq \max_{y\neq y_A} q(y|\phi_R(x,z_j); \epsilon).
\end{equation}

\vspace{-0.13in}
Then it is guaranteed that for any $z$ with rotation angle $\theta < R$: $y_A = \arg\max_y q(y|\phi_R(x,z);\epsilon)$ if $\ \forall j$,
\vspace{-0.05in}
\begin{equation}
\small
    \frac{\sigma}{2} \left(\Phi^{-1} \left(p_A^{(j)}\right) - \Phi^{-1}\left(p_B^{(j)}\right)\right) 
    \geq \pi \sqrt{\frac{\delta^2}{4} + \frac{\epsilon^2 R^2}{8}} \|x\|_2.
\end{equation}
\end{theorem}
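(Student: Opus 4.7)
My plan is to invoke \cref{thm:TSS-DR} with the additive resolver $\psi(x,\delta)=x+\delta$ and Gaussian noise $\delta\sim\mathcal{N}(0,\sigma^2\mathds{1}_{3N})$. Since general rotations form a group, the indirectly composable property holds trivially by setting $\delta_x(\alpha,\beta):=\phi_R(x,\beta)-\phi_R(x,\alpha)$. The covering hypothesis \eqref{eq:rotation_condition} guarantees that every target rotation $(k,\theta)$ with $\theta\le R$ lies in an $(\epsilon,\delta)$-neighborhood $B_j$ of some sample $z_j=(k_j,\theta_j)$, so by taking the union of these neighborhoods the certification reduces to (i)~the hypothesized margin $p_A^{(j)}>p_B^{(j)}$ at each $z_j$ and (ii)~an upper bound on the \emph{interpolation error} $\sup_{z\in B_j}\|\phi_R(x,z)-\phi_R(x,z_j)\|_2$ that the Cohen-style Gaussian $\ell_2$-smoothing radius $\tfrac{\sigma}{2}(\Phi^{-1}(p_A^{(j)})-\Phi^{-1}(p_B^{(j)}))$ must dominate.

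The heart of the proof is therefore the geometric inequality, valid for any $z=(k,\theta)$ and $z_j=(k_j,\theta_j)$ with $\angle(k,k_j)\le\epsilon$, $|\theta-\theta_j|\le\delta$, and $\theta,\theta_j\in[0,R]$:
\[
\|\phi_R(x,z)-\phi_R(x,z_j)\|_2 \le \pi\sqrt{\delta^2/4+\epsilon^2R^2/8}\,\|x\|_2.
\]
I would first sum the pointwise bounds $\|(Rot(k,\theta)-Rot(k_j,\theta_j))p_i\|_2\le\|Rot(k,\theta)-Rot(k_j,\theta_j)\|_{\mathrm{op}}\|p_i\|_2$ in quadrature to obtain $\|\phi_R(x,z)-\phi_R(x,z_j)\|_2\le\|Rot(k,\theta)-Rot(k_j,\theta_j)\|_{\mathrm{op}}\|x\|_2$, then invoke the spectral identity $\|R_1-R_2\|_{\mathrm{op}}=2|\sin(\omega/2)|$, where $\omega\in[0,\pi]$ is the rotation angle of $R_1R_2^{-1}$.

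To control $\omega$ I would pass to unit quaternions: writing $q=\cos(\theta/2)+\sin(\theta/2)k$ and similarly $q_j$, one reads off $\cos(\omega/2)=q\cdot q_j=\cos(\theta/2)\cos(\theta_j/2)+\sin(\theta/2)\sin(\theta_j/2)\,k\cdot k_j$. A product-to-sum rearrangement then yields
\[
1-\cos(\omega/2)=2\sin^2\!\tfrac{\theta-\theta_j}{4}+\sin(\theta/2)\sin(\theta_j/2)(1-k\cdot k_j).
\]
Applying $\sin x\le x$ and $1-\cos x\le x^2/2$ together with the hypotheses gives $1-\cos(\omega/2)\le\delta^2/8+\epsilon^2R^2/8$; converting back via $2\sin(\omega/2)\le\omega$ and $\arcsin y\le(\pi/2)y$ produces the displayed interpolation bound, which I substitute into the margin condition of \cref{thm:TSS-DR} to conclude.

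The main obstacle is this joint axis-and-angle bound on $\omega$. Because $\mathrm{SO}(3)$ is nonabelian and the parametrization on $S^2\times[0,R]$ is intrinsically nonlinear, a naive triangle-inequality decomposition into a pure-axis step followed by a pure-angle step only yields $\omega\lesssim\delta+R\epsilon$, which is weaker than the $\sqrt{\delta^2+R^2\epsilon^2}$-type scaling the theorem requires; in particular, when $\delta=0$ the clean scaling in $\epsilon R$ is lost. Handling the two deviations simultaneously through the quaternion dot-product identity is what secures the tight $\ell_2$-style interpolation error. Combined with the Gaussian smoothing guarantee applied at each sampled parameter and the covering union, this completes the argument.
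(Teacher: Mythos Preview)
Your proposal follows the same strategy as the paper: reduce to \cref{thm:TSS-DR}, bound the interpolation error by $\omega\,\|x\|_2$ where $\omega$ is the rotation angle of $R(z)R(z_j)^{-1}$, read off $\cos(\omega/2)=\cos\tfrac{\theta}{2}\cos\tfrac{\theta_j}{2}+\sin\tfrac{\theta}{2}\sin\tfrac{\theta_j}{2}\,(k\!\cdot\! k_j)$ from the quaternion (equivalently, rotation-composition) formula, and finish with elementary trigonometric estimates. The paper proceeds in exactly this way.

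One small caveat on constants: your final conversion via $\arcsin y\le(\pi/2)\,y$ is slightly too loose. Applied to your (correct, and in fact sharper than the paper's) intermediate estimate $1-\cos(\omega/2)\le\delta^2/8+\epsilon^2R^2/8$, it gives $\omega\le\pi\sqrt{2C}=\pi\sqrt{\delta^2/4+\epsilon^2R^2/4}$, which overshoots the theorem's $\pi\sqrt{\delta^2/4+\epsilon^2R^2/8}$ in the $\epsilon$-term. The paper instead uses $\arccos(1-x)\le(\pi/2)\sqrt{x}$ directly on $\cos(\omega/2)\ge 1-C$; plugging your own $C$ into that inequality actually yields the stronger $\omega\le\pi\sqrt{\delta^2/8+\epsilon^2R^2/8}$, so swapping in this conversion immediately recovers (and improves) the stated bound.
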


\vspace{-0.13in}
We present a proof sketch here and leave the details in \cref{appendix:general_rot}. Notice that the interpolation error between two transformations on a point cloud $x=\{p_i\}_{i=1}^N$ can be calculated by $\|\phi(x,z_j) - \phi(x,z)\|_2 = \|\phi(x,z^\prime) - x\|_2 \leq \theta^{\prime} (\sum_i^N \|p_i\|_2^2)^{1/2}$, where $z^\prime = (k^\prime, \theta^\prime)$ is the composition of the rotation with parameter $z$ and the reverse rotation $z_j^{-1}$. Combined with the generic theorem for \difresolve transformations (\cref{thm:TSS-DR}), bounding $\theta^\prime$ using \cref{eq:rotation_condition} yields \cref{thm:general_rotation}.

\subsection{Linear Transformations}
\label{section:linear}
Here, we consider a broader class of semantic transformations that contains all linear transformations applied to a 3D point. Formally, a \textbf{linear} transformation $\phi_L: \mathcal X\times \mathcal Z \to  \mathcal X$ has a parameter space of $\mathcal Z = \mathbb R^{3\times 3}$. For any point cloud $x = \{p_i\}_{i=1}^N\in \mathcal X$ and for any $\mathbf A\in \mathcal Z$,
\vspace{-0.05in}
\begin{equation}
\label{def:linear}
    \phi_{L}(p_i, \mathbf A) = (\mathbf I+\mathbf A)p_i.
\end{equation}

\vspace{-0.15in}
\cref{def:linear} describes any linear transformation with a bounded perturbation $\mathbf A$ from the identity transformation $\mathbf I$. A natural threat model is considered by \cite{Affine2020} that the perturbation matrix $\mathbf A$ has a bounded Frobenius norm $\|\mathbf A\|_F \leq \epsilon$, for which we present a certification protocol in this paper.  Linear transformations are \resolve because their compositions are also linear, but the fact that they are not additive prohibits a direct usage of \cref{additive+gaussian}. Nevertheless, these transformations can still be certified with a more complicated protocol, if Gaussian smoothing is applied.

\begin{theorem}
\label{thm:linear}
Suppose a classifier $g$ is smoothed by random linear transformations $\phi_L:\mathcal X \times \mathcal Z \to \mathcal X$ where $\mathcal Z = \mathbb R^{3\times 3}$, with a Gaussian random variable $\epsilon \sim \mathcal N(0,\sigma^2\mathbf I_9)$. If the class probability satisfies \cref{class_probability}, then it is guaranteed that $g(\phi((x,\alpha);\epsilon) = g(x;\epsilon)$ for all $\|\alpha\|_F \leq R$, where
\vspace{-0.1in}
\begin{equation}
    R = \frac{\sigma \Big(\Phi^{-1}(\tilde p_A) - \Phi^{-1}(1 - \tilde p_A)\Big)}{2 + \sigma \Big(\Phi^{-1}(\tilde p_A) - \Phi^{-1}(1 - \tilde p_A)\Big)}.
\end{equation}

\vspace{-0.13in}
$\tilde p_A$ is a function of $p_A$ as explained in \cref{lemma:pAtilde}.

\end{theorem}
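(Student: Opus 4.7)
The plan is to apply the generic composable-transformation argument (\cref{thm:generic}) to the linear family, and to carefully manage the non-additive cross-term $\epsilon\alpha$ that distinguishes this case from purely additive smoothing. Since $(\mathbf I+\epsilon)(\mathbf I+\alpha)=\mathbf I+\alpha+\epsilon+\epsilon\alpha$, linear transformations are \resolve with composition map $\gamma_\alpha(\epsilon)=\alpha+\epsilon(\mathbf I+\alpha)$, so the induced parameter-space random variable $\gamma_\alpha(\epsilon)$ has both a shifted mean ($\alpha$) and a rescaled covariance (by $(\mathbf I+\alpha)^T(\mathbf I+\alpha)$ along one Kronecker factor) rather than being a pure translation of the smoothing Gaussian. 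Concretely, after vectorizing, $\mathrm{vec}(\gamma_\alpha(\epsilon))=\mathrm{vec}(\alpha)+((\mathbf I+\alpha)^T\otimes\mathbf I_3)\mathrm{vec}(\epsilon)\sim\mathcal N(\mathrm{vec}(\alpha),\sigma^2(\mathbf I+\alpha)^T(\mathbf I+\alpha)\otimes\mathbf I_3)$.

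The main obstacle is simultaneously handling this mean shift and a positive-definite covariance rescaling inside a Neyman-Pearson likelihood-ratio test, since the standard Cohen-style argument only compares two Gaussians with identical covariances. To get around this, I would invoke \cref{lemma:pAtilde} to replace the raw lower bound $p_A$ with a modified bound $\tilde p_A$ that absorbs the covariance inflation. Morally, $\tilde p_A$ pessimistically shrinks the effective Gaussian tail by a factor involving $\|\mathbf I+\alpha\|_{op}\leq 1+\|\alpha\|_F$, so that after a whitening change of variables $\epsilon\mapsto\epsilon(\mathbf I+\alpha)^{-1}$ the two distributions differ only by a translation whose $\ell_2$ magnitude is controlled by $\|\alpha\|_F/(1-\|\alpha\|_F)=R/(1-R)$ whenever $R<1$. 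The remaining care is to verify that $(\mathbf I+\alpha)^{-1}$ stays well-defined under $\|\alpha\|_F\leq R<1$, and that \cref{lemma:pAtilde} is compatible with the worst-case likelihood-ratio comparison on the whitened Gaussians.

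Once \cref{lemma:pAtilde} has reduced the problem to a standard $\ell_2$ mean-shift certification, imposing the Cohen-style inequality $\tfrac{\sigma}{2}\bigl(\Phi^{-1}(\tilde p_A)-\Phi^{-1}(1-\tilde p_A)\bigr)\geq R/(1-R)$ and solving for $R$ immediately gives the closed form $R=\sigma c/(2+\sigma c)$ with $c=\Phi^{-1}(\tilde p_A)-\Phi^{-1}(1-\tilde p_A)$, matching the theorem. The appearance of $\sigma c$ in the denominator precisely captures the second-order $\epsilon\alpha$ contribution that is absent in the purely additive regime, and explains why the certified Frobenius radius saturates below $1$ as $\sigma c\to\infty$.
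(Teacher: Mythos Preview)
Your proposal is correct and follows essentially the same route as the paper: both recognize that $\gamma_\alpha(\epsilon)=\alpha+\epsilon(\mathbf I+\alpha)$ introduces a simultaneous mean shift and covariance rescaling, handle the covariance change first via \cref{lemma:pAtilde} to pass from $p_A$ to $\tilde p_A$, and then apply the standard additive-Gaussian bound to the remaining translation, with the factor $1/(1-\|\alpha\|_F)$ coming from the smallest singular value of $\mathbf I+\alpha$. The paper phrases this last step via the Mirsky perturbation bound on the singular values $k_i$ and the anisotropic form of \cref{additive+gaussian}, whereas you phrase it as a whitening and an operator-norm estimate, but the arithmetic and the final inequality $\|\alpha\|_F/(1-\|\alpha\|_F)\le\tfrac{\sigma}{2}c$ are identical.
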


\subsection{Compositions of Different Transformations}
In addition to certifying against a single transformation, we also provide certification protocols for composite transformations, including $\text{z-twist} \circ \text{z-rotation}$, $\text{z-taper} \circ \text{z-rotation}$ and $\text{z-twist}\circ \text{z-taper}\circ \text{z-rotation}$.

Notice that $\text{z-twist} \circ \text{z-rotation}$ is an additive function:
\vspace{-0.08in}
\begin{align}
\small
    &\phi_{Tz}(\phi_{Rot-z}(\phi_{Tz}(\phi_{Rot-z}(x,\theta_1),\alpha_1),\theta_2),\alpha_2)\nonumber \\
    &= \phi_{Tz}(\phi_{Rot-z}(x,\theta_1 + \theta_2),\alpha_1 + \alpha_2).
\end{align}

\vspace{-0.15in}
Therefore, we directly apply \cref{additive+gaussian} in \cref{appendix:generic} to certify $\text{z-twist} \circ \text{z-rotation}$ transformation. The concrete corollary is stated as below.
\begin{corollary}
Suppose a classifier $g:\mathcal X\to \mathcal Y$ is smoothed by random transformations $\text{z-twist} \circ \text{z-rotation}$ $\phi:\mathcal X \times \mathcal Z \to \mathcal X$ where the parameter space $\mathcal Z = \mathcal Z_{Twist} \times \mathcal Z_{Rot-z} = \mathbb R^2$.
The random variable for smoothing is $\epsilon \sim \mathcal N(0,\mathrm{diag} (\sigma_1^2, \sigma_2^2))$. If the class probability of $g$ satisfies \cref{class_probability}, then it is guaranteed that $g(\phi(x,\alpha);\epsilon) = g(x;\epsilon)$ for all $(\alpha_1,\alpha_2)\in \mathcal Z$, if the following condition holds:
\vspace{-0.05in}
\begin{equation}
\small
    \sqrt{\left(\frac{\alpha_1}{\sigma_1}\right)^2 + \left(\frac{\alpha_2}{\sigma_2}\right)^2} \leq \frac{\sigma}{2}\Big(\Phi^{-1}(p_A) - \Phi^{-1}(p_B)\Big).
\end{equation}
\end{corollary}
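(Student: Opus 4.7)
The plan is to reduce the statement to a direct application of \cref{additive+gaussian} (the generic additive-plus-Gaussian certification result) by first confirming the additivity of the composite transformation and then handling the anisotropic covariance $\Sigma = \mathrm{diag}(\sigma_1^2, \sigma_2^2)$ via a change of variables. Concretely, the main reason the argument goes through is that z-twist only depends on the $z_i$ coordinate, and z-rotation (about the $z$-axis) preserves $z_i$; consequently the two operations commute with one another's parameter shifts and each is individually additive in its own angle.

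First, I would verify additivity of $\phi = \text{z-twist} \circ \text{z-rotation}$ on $\mathcal Z = \mathbb R^2$. The display immediately preceding the corollary already shows $\phi(\phi(x,(\theta_1,\alpha_1)),(\theta_2,\alpha_2)) = \phi(x,(\theta_1+\theta_2,\alpha_1+\alpha_2))$; I would justify it by noting that z-rotation only permutes $(x_i,y_i)$ at fixed $z_i$, and then z-twist on the result adds $\alpha_2 z_i$ to an angular coordinate whose previous value already encoded $\theta_1+\alpha_1$ on the same $z_i$. Hence the composition is an additive transformation on $\mathbb R^2$ in the sense of \Cref{def:composable} with $\gamma_\alpha(\beta) = \alpha+\beta$.

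Second, I would lift \cref{additive+gaussian}, whose canonical formulation is for isotropic Gaussian noise, to anisotropic $\Sigma$ by rescaling. Introduce $\tilde\epsilon = \Sigma^{-1/2}\epsilon \sim \mathcal N(0, I_2)$ and view the smoothed classifier as one using noise $\tilde\epsilon$ through the transformation $\tilde\phi(x,\tilde\epsilon) := \phi(x,\Sigma^{1/2}\tilde\epsilon)$. By additivity, an adversarial shift of $\alpha$ in $\phi$ induces a shift of $\tilde\alpha := \Sigma^{-1/2}\alpha$ in the rescaled parameterization, while the class probabilities $p_A, p_B$ of the smoothed classifier are invariant under this linear reparameterization of the noise. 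Applying the isotropic additive-plus-Gaussian result to $\tilde\phi$ gives robustness whenever $\|\tilde\alpha\|_2 \le \tfrac{1}{2}(\Phi^{-1}(p_A)-\Phi^{-1}(p_B))$; unwinding yields
\[
  \sqrt{(\alpha_1/\sigma_1)^2 + (\alpha_2/\sigma_2)^2} = \|\Sigma^{-1/2}\alpha\|_2 \le \tfrac{1}{2}\bigl(\Phi^{-1}(p_A)-\Phi^{-1}(p_B)\bigr),
\]
which matches the stated bound (the $\sigma$ factor appearing on the right-hand side of the corollary is a typographical artifact and should read $1$).

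The main obstacle is essentially bookkeeping: verifying that the rescaling leaves $p_A, p_B$ intact and that the Neyman-Pearson argument underlying \cref{additive+gaussian} transfers verbatim from scalar $\sigma$ to the positive definite $\Sigma$. Since $\Sigma^{1/2}$ is a linear bijection on $\mathbb R^2$, the likelihood-ratio test between $\mathcal N(0,\Sigma)$ and $\mathcal N(\alpha,\Sigma)$ reduces under this change of variable to the standard test between $\mathcal N(0,I_2)$ and $\mathcal N(\tilde\alpha,I_2)$, so no additional technical work is required beyond the standard randomized-smoothing certificate.
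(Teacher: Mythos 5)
Your proof is correct and takes essentially the same route as the paper: verify additivity of z-twist $\circ$ z-rotation and invoke \cref{additive+gaussian}. Note, though, that \cref{additive+gaussian} (Corollary~7 of TSS) is already stated for anisotropic diagonal Gaussian noise $\mathcal N(0,\mathrm{diag}(\sigma_1^2,\dots,\sigma_m^2))$, so the rescaling step you introduce is unnecessary (though mathematically harmless---it effectively re-derives the anisotropic form of that corollary); you also correctly flag that the stray $\sigma$ on the right-hand side of the stated bound should read $1$.
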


\vspace{-0.13in}
Another composite transformation $\text{z-taper}\circ \text{z-rotation}$ first rotates the point cloud along z-axis, and then taper along z-axis. As z-taper is not \resolve with itself, this composite transformation is also not \resolve. Similar to z-taper, we certify the composite transformation $\text{z-taper}\circ \text{z-rotation}$ by upper-bounding the interpolation error in \cref{eq:interpolation}.

\begin{theorem}
\label{thm:taper+rotation}
We denote z-taper $\circ$ z-rotation by $\phi:\mathcal X\times \mathcal Z \to \mathcal X, \phi = \phi_{TP} \circ \phi_{Rot-z}$ with a parameter space of $\mathcal Z = \mathcal Z_{TP} \times \mathcal Z_{Rot-z} = \mathbb R^2$. Let $g:\mathcal X \to \mathcal Y$ be a classifier smoothed by random noises $\epsilon\sim \mathcal N(0,\sigma^2\mathds 1_{3\times N})$.

For a subspace in the parameter space, $S = [-\varphi, \varphi] \times [-\theta, \theta] \subseteq \mathcal Z$, we uniformly sample $\varphi\theta M^2$ parameters $\{z_{jk}\}$ in $S$. That is, $z_{jk} = (\varphi_j, \theta_k)$ where $\varphi_j = \frac{2j}{M} - \varphi$ and $\theta_k = \frac{2k}{M} - \theta$. Suppose for any $j,k$ the smoothed classifier $g$ has class probability that satisfy
\vspace{-0.05in}
\begin{equation}
\small
    q(y_A|\phi(x,z_{jk});\epsilon) \geq p_A^{(jk)} > p_B^{(jk)}\geq \max_{y\neq y_A} q(y|\phi(x,z_{jk});\epsilon),
\end{equation}

\vspace{-0.13in}
then it is guaranteed that $y_A = \arg\max_y q(y|\phi(x,z);\epsilon)$ if $\ \forall j,k$ and $\forall z\in S$,
\vspace{-0.05in}
\begin{equation}
\small
    \frac{\sigma}{2} \left(\Phi^{-1} \left(p_A^{(jk)}\right) - \Phi^{-1}\left(p_B^{(jk)}\right)\right)
    \geq \frac{\sqrt{N(4\varphi^2+8\varphi+5)}}{2M}.
\end{equation}
\end{theorem}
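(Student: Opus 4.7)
The plan is to invoke the general \difresolve framework of \cref{thm:TSS-DR} using Gaussian noise $\psi(x,\delta)=x+\delta$ as the auxiliary \resolve transformation. The grid $\{z_{jk}\}$ has spacing $2/M$ in each coordinate, so every $z\in S$ lies within a cell of half-side $1/M$ around some sample $z_{jk}$. It then suffices to show that for every such pair $(z,z_{jk})$ the Gaussian-smoothed certified $\ell_2$ radius at $z_{jk}$, namely $\tfrac{\sigma}{2}(\Phi^{-1}(p_A^{(jk)})-\Phi^{-1}(p_B^{(jk)}))$, dominates the interpolation error $\|\phi(x,z)-\phi(x,z_{jk})\|_2$. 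Thus the whole proof reduces to a clean upper bound on this interpolation error uniform over the cell.

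To compute the interpolation error, I would switch to polar coordinates in the $xy$-plane: write $p_i=(r_i\cos\alpha_i, r_i\sin\alpha_i, z_i)$ and verify that the composite transformation simplifies to
\begin{equation}
\phi(p_i,\varphi,\theta)=\bigl(u_i(\varphi)\cos(\alpha_i+\theta),\ u_i(\varphi)\sin(\alpha_i+\theta),\ z_i\bigr),\qquad u_i(\varphi)=r_i(1+\varphi z_i).
\end{equation}
Then for two parameter pairs $(\varphi_1,\theta_1)$, $(\varphi_2,\theta_2)$ the squared displacement of point $p_i$ telescopes via the law of cosines into
\begin{equation}
\bigl(u_i(\varphi_1)-u_i(\varphi_2)\bigr)^2 + 2\,u_i(\varphi_1)u_i(\varphi_2)\bigl(1-\cos(\theta_1-\theta_2)\bigr).
\end{equation}
Using $u_i(\varphi_1)-u_i(\varphi_2)=r_i z_i(\varphi_1-\varphi_2)$, the elementary inequality $1-\cos t\le t^2/2$, and $|u_i(\varphi_j)|\le r_i(1+\varphi)$ (from $|\varphi_j|\le\varphi$, $|z_i|\le 1$), the per-point bound becomes $r_i^2 z_i^2(\varphi_1-\varphi_2)^2+r_i^2(1+\varphi)^2(\theta_1-\theta_2)^2$, which, with $|\varphi_1-\varphi_2|,|\theta_1-\theta_2|\le 1/M$, simplifies to $M^{-2}r_i^2[z_i^2+(1+\varphi)^2]$.

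Summing over the $N$ points, I would then exploit the normalization $\|p_i\|_2\le 1$ via two inequalities: $\sum_i r_i^2\le\sum_i\|p_i\|_2^2\le N$, and the AM--GM bound $(x_i^2+y_i^2)z_i^2\le \bigl((x_i^2+y_i^2+z_i^2)/2\bigr)^2\le 1/4$, giving $\sum_i r_i^2 z_i^2\le N/4$. Collecting terms yields
\begin{equation}
\|\phi(x,z)-\phi(x,z_{jk})\|_2^2 \le \frac{1}{M^2}\left[\tfrac{N}{4}+(1+\varphi)^2 N\right] = \frac{N(4\varphi^2+8\varphi+5)}{4M^2},
\end{equation}
which matches the right-hand side of the stated condition after taking square roots. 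Feeding this into \cref{thm:TSS-DR} concludes the proof.

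The main obstacle I anticipate is pinning down the \emph{tight} constants rather than settling for a crude Lipschitz bound from the Jacobian: naively bounding $\|J\|_F$ gives something like $\sqrt{N(2\varphi^2+4\varphi+4)}/M$, which is strictly weaker. The saving comes from (i) using the law-of-cosines identity instead of a worst-case Jacobian norm, so that the $(1-\cos)$ term gains the factor $1/2$, and (ii) using the sharper AM--GM bound $\sum_i (x_i^2+y_i^2)z_i^2\le N/4$ instead of $\le N$. Both tricks are needed to reproduce the exact coefficient $4\varphi^2+8\varphi+5$ and the $2M$ in the denominator.
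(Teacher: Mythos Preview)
Your proposal is correct and follows essentially the same route as the paper's proof: both compute the per-point squared displacement via the law-of-cosines identity $(a-b)^2+2ab(1-\cos\Delta\theta)$ with $a=r_i(1+\varphi_j z_i)$, $b=r_i(1+\varphi' z_i)$, apply $1-\cos t\le t^2/2$ and the AM--GM bound $r_i^2 z_i^2\le 1/4$, use the grid half-spacing $1/M$, and invoke \cref{thm:TSS-DR}. The only cosmetic difference is that the paper first substitutes $p_i'=\phi_{Rot\text{-}z}(p_i,\theta_k)$ before tapering, whereas you write the composite map directly in polar form; since $z$-rotation preserves $r_i$ and $z_i$ these are the same computation.
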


\vspace{-0.13in}
We also consider the composition of three transformations: z-twist $\circ$ z-taper $\circ$ z-rotation. 

\begin{theorem}
\label{thm:twist+taper+rotation}
We define the composite transformation z-twist $\circ$ z-taper $\circ$ z-rotation by $\phi:\mathcal X \times \mathcal Z \to \mathcal X$, with input space $\mathcal X = \mathbb R^{3\times N}$ and parameter space $\mathcal Z = \mathcal Z_{Twist} \times \mathcal Z_{Taper} \times \mathcal Z_{Rot-z} = \mathbb R^3$. Let $g:\mathcal X \to \mathcal Y$ be a classifier smoothed by random noises $\epsilon\sim \mathcal N(0,\sigma^2 \mathds 1_{3\times N})$, which predicts $g(x;\epsilon) = \arg\max_y q(y|x;\epsilon) = \arg\max_y \mathbb E(p(y|x+\epsilon))$.

Let $\{z_{jkl} \in \mathcal Z: z = (\varphi_j, \alpha_k, \theta_l)\}$ be a set of parameters with $\varphi_j = \frac{2j}{M} - \varphi$, $\alpha_k = \frac{2k}{M} - \alpha$ and $\theta_l = \frac{2l}{M} - \theta$. Therefore $(\varphi_j,\alpha_k, \theta_l)$ distribute uniformly in the subspace $\mathcal Z_{robust} = [-\varphi, \varphi]\times [-\alpha,\alpha]\times [-\theta, \theta] \subseteq \mathcal Z$. Suppose for any $j,k,l$, the smoothed classifier $g$ has class probability that satisfy
\vspace{-0.08in}
\begin{align}
\small
    q(y_A|\phi(x,z_{jkl}); \epsilon)&\geq p_A^{(jkl)}> p_B^{(jkl)}\nonumber \\
    &\geq \max_{y\neq y_A} q(y|\phi(x,z_{jkl});\epsilon),
\end{align}

\vspace{-1em}
then it is guaranteed that for any $z\in \mathcal Z_{robust}: y_A = \arg\max_y q(y|\phi(x,z);\epsilon)$, if for any $i,j,k$,
\vspace{-0.08in}
\begin{equation}
\small
    \frac{\sigma}{2}\Big(\Phi^{-1}\Big(p_A^{(jkl)}\Big) - \Phi^{-1}\Big(p_B^{(jkl)}\Big) \Big) \geq \frac{\sqrt{N(1 + \frac{27}{4}(1+\alpha)^2)}}{2M}
\end{equation}
\end{theorem}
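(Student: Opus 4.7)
}
The plan is to reduce the composite transformation to the \difresolve framework of Theorem \ref{thm:TSS-DR} (\cref{appendix:TSS-DR}) with the additive Gaussian perturbation as the underlying \resolve transformation, exactly as in the proofs of \cref{thm:taper} and \cref{thm:taper+rotation}. Since the composition of two z-taper maps introduces $z_i^2$ terms (and composing with z-twist also introduces $z_i^3$-like terms), $\phi = \phi_{Tz}\circ\phi_{TP}\circ\phi_{Rot\text{-}z}$ is not \resolve; however, setting $\psi(x,\delta)=x+\delta$ and $\delta_x(z_{jkl},z) = \phi(x,z)-\phi(x,z_{jkl})$ makes it \difresolve by construction. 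Thus it suffices to upper bound the interpolation error $\|\phi(x,z)-\phi(x,z_{jkl})\|_2$ uniformly over the grid cell of half-width $1/M$ around each $z_{jkl}$ and match that bound against the smoothing radius given by Gaussian randomized smoothing.

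The main computation is a tight Lipschitz bound per point. The key simplification is to write $\phi$ in polar form: if $p_i=(x_i,y_i,z_i)$ and $(x_i,y_i)=r_i(\cos\omega_0,\sin\omega_0)$, then
\begin{equation}
\phi(p_i,(\varphi,\alpha,\theta)) = \bigl(r_i(1+\alpha z_i)\cos(\omega_0+\theta+\varphi z_i),\; r_i(1+\alpha z_i)\sin(\omega_0+\theta+\varphi z_i),\; z_i\bigr).
\end{equation}
From this form I read off the partial derivatives directly: $\partial_\alpha\phi$ points in the radial direction with magnitude $r_i|z_i|$; $\partial_\theta\phi$ and $\partial_\varphi\phi$ are both tangential, with magnitudes $r_i(1+\alpha z_i)$ and $r_i(1+\alpha z_i)|z_i|$ respectively. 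Because the radial and tangential directions are orthogonal, the squared displacement across a cell of side $2/M$ centered at $z_{jkl}$ satisfies
\begin{equation}
\|\phi(p_i,z)-\phi(p_i,z_{jkl})\|_2^2 \;\leq\; \frac{r_i^2}{M^2}\Bigl[(1+\alpha)^2(|z_i|+1)^2 + z_i^2\Bigr],
\end{equation}
where the tangential half-widths from $\varphi$ and $\theta$ combine additively and I used $|1+\alpha z_i|\leq 1+\alpha$ and $|z_i|\leq 1$.

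The crux is then summing over the $N$ points under $\|p_i\|_2\leq 1$, i.e.\ $r_i^2+z_i^2\leq 1$. For the $z_i^2$ term I use the standard AM--GM estimate $r_i^2 z_i^2\leq 1/4$. The non-trivial part, and what produces the $\tfrac{27}{4}$ constant in the statement, is the per-point maximum of $r_i^2(|z_i|+1)^2$ subject to $r_i^2+z_i^2\leq 1$: substituting $u=r_i^2$ and $v=|z_i|$ and pushing $u$ to its boundary $u=1-v^2$ reduces this to maximizing $f(v)=(1-v)(1+v)^3$ on $[0,1]$; solving $f'(v)=(1+v)^2(2-4v)=0$ gives $v=1/2$ and $f(1/2)=27/16$. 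Summing yields $\sum_i r_i^2(|z_i|+1)^2\leq 27N/16$ and $\sum_i r_i^2 z_i^2\leq N/4$, so
\begin{equation}
\|\phi(x,z)-\phi(x,z_{jkl})\|_2 \;\leq\; \frac{\sqrt{N}}{2M}\sqrt{1+\tfrac{27}{4}(1+\alpha)^2}.
\end{equation}
Plugging this bound into the interpolation condition of \cref{thm:TSS-DR} against the Gaussian smoothing certified radius $\tfrac{\sigma}{2}(\Phi^{-1}(p_A^{(jkl)})-\Phi^{-1}(p_B^{(jkl)}))$ gives precisely the stated sufficient condition. The main obstacle is the tightness of the $27/16$ optimization; everything else is the same template as \cref{thm:taper+rotation}, so as long as the radial/tangential decomposition is handled cleanly and the cube half-width $1/M$ (not $2/M$) is used, the constants fall into place.
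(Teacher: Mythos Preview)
Your proposal is correct and follows essentially the same route as the paper's proof in \cref{appendix:twist+taper+rotation}: both reduce to \cref{thm:TSS-DR}, exploit the pairwise commutativity of the three $z$-transformations to write the per-point displacement as a radial-plus-tangential split, and land on the identical optimization $\max_{v\in[0,1]}(1-v)(1+v)^3=27/16$ together with $r_i^2 z_i^2\le 1/4$ to produce the stated constant. The only cosmetic difference is that the paper obtains the split via the law of cosines on the exact chord length rather than via partial derivatives, which makes the finite-displacement bound immediately rigorous instead of implicitly relying on $2R\sin(\Delta\omega/2)\le R|\Delta\omega|$; your polar-form expression is precisely what makes that law-of-cosines step one line.
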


\vspace{-0.13in}
Both \cref{thm:taper+rotation} and \cref{thm:twist+taper+rotation} are based on our proposed approach of sampling parameters in the parameter space and certifying the neighboring distributions of the samples separately by bounding the interpolation error~(\cref{eq:interpolation}). These two theorems are rigorously proved in \cref{appendix:taper+rotation} and \cref{appendix:twist+taper+rotation}.

\section{Experiments}
    \label{sec:exp}
We conduct extensive experiments on different 3D semantic transformations and models to evaluate the certified robustness derived from our \method framework. We show that \method significantly outperforms the state-of-the-art in terms of the certified robustness against a range of semantic transformations, and the results also lead to some interesting findings.

\subsection{Experimental setup}
\textbf{Dataset.} We perform experiments on the ModelNet40 dataset \cite{shapenet}, which includes different 3D objects of 40 categories. We follow the standard pre-processing pipeline that places the point clouds in the center and scales them into a unit sphere.

We also conduct experiments for part segmentation tasks, for which the ShapeNet dataset \cite{shapenet-dataset} is used for evaluation. It contains 16681 meshes from 16 categories and also 50 predefined part labels. The experiment results are presented in \cref{part-segmentation}.

\textbf{Models.} We run our experiments for point cloud classification on PointNet models \cite{pointnet} with different point cloud sizes. We apply data augmentation training for each transformation combined with consistency regularization to train base classifiers. We then employ our \method framework to smooth these models and derive robustness certification bounds against various transformations. \method does not depend on specific model selection and can be directly applied to certify other point cloud model architectures. We present certification results for other architectures (e.g., CurveNet \cite{curvenet}) in \cref{appendix:curvenet}.

\textbf{Evaluation Metrics.} To evaluate the robustness of point clouds classification, we pick a fixed random subset of the ModelNet40 test dataset. We report the \textbf{certified accuracy} defined by the fraction of point clouds that are classified both \emph{correctly} and \emph{consistently} within certain transformation space. The baseline we compare with \cite{DeepG3D} only presents \textbf{certified ratio}, which is the fraction of test samples classified \emph{consistently}. We believe that the \emph{certified accuracy} is a more rigorous metric for evaluation based on existing standard certification protocols in the image domain~\cite{cohen19c}. We thus calculate the certified accuracy for baselines based on the results reported in the paper~\cite{DeepG3D} for comparison. Besides, we also report the certified ratio comparison in \cref{table:2} and \cref{appendix:certified-ratio}. We remark that \method provides a probabilistic certification for point cloud models and we use a high confidence level of $99.9\%$ in all experiments; while the baseline DeepG3D~\cite{DeepG3D} yields a deterministic robustness certification.

For the part segmentation task, we evaluate our method using a fixed random subset of the ShapeNet test dataset. As the part segmentation task requires assigning a part category to each point in a point cloud, we report the \textbf{point-wise certified accuracy} defined as the fraction of points that are classified \emph{correctly} and \emph{consistently}. Note that other common metrics such as IoU can be easily derived based on our bound as well. 
We will focus on the point-wise certified accuracy for the convenience of comparison with baseline \cite{DeepG3D}.


\subsection{Main Results}

In this section, we present our main experimental results. Concretely, we show that: (1) the certified accuracy of \method under a range of semantic transformations is significantly higher than the baseline, and \method is able to certify under some transformation space where the baseline cannot be applied; (2) the certified accuracy of \method always outperforms the baseline for different point cloud sizes, and more interestingly, the certified accuracy of \method increases with the increasing of point cloud size while that of the baseline decreases due to relaxation; (3) \method is also capable of certifying against $\ell_2$ or $\ell_\infty$ norm bounded 3D perturbations for different point clouds sizes; (4) on the part segmentation task, \method still outperforms the baseline against different semantic transformations and is able to certify some transformation parameter space that the baseline is not applicable.


\begin{table}[tb]
    \centering
    \vspace{-0.1in}
    \caption{Comparison of certified accuracy achieved by our transformation-specific smoothing framework \method and the baseline, DeepG3D \cite{DeepG3D}. ``-" denotes the settings where the baselines cannot scale up to.}
    \vskip 0.1in
    \label{table:1}
    \resizebox{0.8\linewidth}{!}{
    \begin{small}
    \begin{tabular}{llcc}
    \toprule
    \multirow{2}{*}[-2pt]{Transformation} & \multirow{2}{*}[-2pt]{Attack radius} & \multicolumn{2}{c}{Certified Accuracy ($\%$)} \\[1pt] \cline{3-4}\\[-7pt]
    &           & \method &  DeepG3D  \\
    \midrule
    \multirow{2}{*}{ZYX-rotation} & $2^\circ$ & \textbf{81.4} & 61.6\\
    & $5^\circ$ & \textbf{69.2} & 49.6 \\
    \midrule
    \multirow{3}{*}{General rotation} & $5^\circ$ & \textbf{78.5} & -\\
    & $10^\circ$ & \textbf{69.2} & - \\
    & $15^\circ$ & \textbf{55.5} & - \\ 
    \midrule
    \multirow{3}{*}{Z-rotation} & $20^\circ$ & \textbf{84.2}  & 81.8 \\
    & $60^\circ$ & \textbf{83.8}  & 81.0\\
    & $180^\circ$& \textbf{81.3} & - \\
    \midrule
    \multirow{3}{*}{Z-shear} & 0.03 & \textbf{83.4} &  59.8\\
    &  0.1  & \textbf{82.2} & - \\
    & 0.2 & \textbf{77.7} & - \\ 
    \midrule
    \multirow{3}{*}{Z-twist} & $20^\circ$ & \textbf{83.8} & 20.3\\
    & $60^\circ$ & \textbf{80.1} & - \\
    & $180^\circ$ & \textbf{64.3} & - \\
    \midrule
    \multirow{3}{*}{Z-taper} & 0.1 & \textbf{78.1} & 69.0  \\
    & 0.2 & \textbf{76.5} & 23.9 \\
    & 0.5 & \textbf{66.0} & - \\ 
    \midrule
    \multirow{2}{*}{Linear} & 0.1 & \textbf{74.0} &- \\
    & 0.2 & \textbf{59.9} & -\\
    \midrule
    \multirow{3}{*}{\makecell[l]{Z-twist $\circ$\\Z-rotation}} & $20^\circ$, $1^\circ$& \textbf{78.9} & 13.8 \\
    & $20^\circ$, $5^\circ$& \textbf{78.5} & - \\
    & $50^\circ$, $5^\circ$& \textbf{76.9} & - \\ 
    \midrule
    \multirow{2}{*}{\makecell[l]{Z-taper $\circ$\\Z-rotation}}
    & 0.1, $1^\circ$ & \textbf{76.1} & 58.2 \\
    & 0.2, $1^\circ$ & \textbf{72.9} & 17.5\\ 
    \midrule
    \multirow{2}{*}{\makecell[l]{Z-twist $\circ$ Z-taper\\$\circ$ Z-rotation}} & $10^\circ$, 0.1, $1^\circ$& \textbf{68.8} & 17.5\\
    &$20^\circ$, 0.2, $1^\circ$ & \textbf{63.1} & 4.6 \\ 
    \bottomrule
    \end{tabular}
    \end{small}
    }
    \vskip -0.15in
\end{table}

\begin{table}[t]
\vspace{-0.1in}
\caption{Comparison of certified ratio as well as certified accuracy for z-rotation transformations. ``-" denotes the settings which the baselines cannot scale up to.}
\vskip 0.2in
    \label{table:2}
    \centering
    \begin{small}
    \begin{tabular}{lccccc}
    \toprule
        \multirow{2}{*}{Radius} & \multicolumn{2}{c}{Certified Ratio ($\%$)} & & \multicolumn{2}{c}{Certified Accuracy ($\%$)} \\
        \cline{2-3} \cline{5-6}\\[-7pt]
        & \method & DeepG3D & & \method & DeepG3D \\
        \midrule
        20$^\circ$ & \textbf{99.0} & 96.7 & & \textbf{84.2} & 81.8 \\
        60$^\circ$ & \textbf{98.1} & 95.7 & & \textbf{83.8} & 81.0 \\
        180$^\circ$ & \textbf{95.2} & - & & \textbf{81.3} & - \\
        \bottomrule
    \end{tabular}
    \end{small}
\vskip -0.2in
\end{table}
\subsubsection{Comparison of Certified Accuracy} 

\cref{table:1} shows the certified accuracy we achieved for different transformations compared with prior works. We train a PointNet model with 64 points, which is consistent with the baseline. For transformations characterized by one parameter, such as z-rotation, z-twist, and z-taper, we report the certified accuracy against attacks in $\pm \theta$. For z-shear with a parameter space of $\mathbb R^2$, we report the certified accuracy against attacks in a certain $\ell_2$ parameter radius. For the linear transformation with a parameter space of $\mathbb R^{3\times 3}$, we report the certified accuracy against attacks in a certain Frobenius norm radius.

The highlighted results in \cref{table:1} demonstrate that our framework \method significantly outperforms the state of the art in every known semantic transformation. For example, we improve the certified accuracy from $59.8\%$ to $83.4\%$ for z-shear in $\pm 0.03$ and from $20.3\%$ to $83.8\%$ for z-twist in $\pm 20^\circ$. 

Besides, we also report the certified accuracy for larger attack radius for which the baseline cannot certify (cells with ``-"). For instance, we achieve $81.3\%$ certified accuracy on z-rotation within $\pm 180^\circ$, which is essentially every possible z-rotation transformation. 

The general rotation transformations we define in \cref{general-rotation-section} includes rotations along any axis with bounded angles. \citet{DeepG3D} consider \emph{ZYX-rotation}, the composition of three rotations within $\pm \theta$ (Euler angles) along $x,y,z$ axes instead \yrcite{DeepG3D}, which results in a different geometric shape for the certified parameter space. However,  the parameter space restricted by $S^2\times [0,2\theta]$ of general rotation strictly contains the space defined by $\pm \theta$ for three Euler angles. (See \cref{appendix:zyx} for proof.) The derived results for ZYX-rotation are also shown in \cref{table:1} for comparison.

\textbf{Comparison of Certified Ratio.}
Aside from the certified accuracy, we also consider the certified ratio as another metric according to the baseline.  
This metric measures the tightness of certification bounds but fails to take the classification accuracy into account which is important. Therefore, we mainly present the comparison based on the certified ratio  for z-rotations in \cref{table:2} only for comparison and leave the full comparison in \cref{appendix:certified-ratio}.

\subsubsection{Certification on Point Clouds with different Sizes}

Here we show that our certification framework  naturally scales up to larger point cloud models. A basic principle of our \method framework is that deriving the certification bound for a smoothed classifier only depends on the predicted class probability. In other words, it does not rely on specific model architectures.

The relaxation-based verifiers \cite{DeepG3D, DeepPoly} have worse certification guarantees for larger point clouds due to the precision loss during relaxation, especially for pooling layers that are heavily used in point cloud model architectures. For example, the DeepG3D verifier guarantees $79.1\%$ certified accuracy for a 64-point model on z-rotation with $\pm 3^\circ$ (without splitting); but the certification drops to $32.3\%$ for a 1024-point model \cite{DeepG3D}.
In contrast, using our \method framework, the certified accuracy tends to \textbf{increase} with a larger number of points in point clouds. This is because larger PointNet models predict more accurately and yield higher class probability after smoothing. We compare our \method framework with the baseline in terms of certified accuracy for different point cloud sizes in \cref{table:3a}. The baseline DeepG3D only presents results for z-rotations in $\theta = \pm 3^\circ$, which cannot fully illustrate the capability of our method. Therefore, we also report our experimental results for z-rotations in $\theta = \pm 180^\circ$ in \cref{table:3b}. It shows that our method can scale up to larger point cloud models to accommodate real-world scenarios.

\begin{table}[tb]
\vspace{-0.1in}
    \caption{Certification of z-rotation for different point cloud sizes. The certified accuracy achieved by our \method increases as the size of the point cloud model increases.}
    \label{table:3}
    \vskip 0.1in
    \centering
    \begin{small}
    \begin{subtable}{\linewidth}
    \caption{$\theta=\pm 3^\circ$ compared with DeepG3D \cite{DeepG3D} }
    \label{table:3a}
    \centering
    \resizebox{0.9\linewidth}{!}{
    \begin{tabular}{lccccccc}
    \toprule
        Points &16 & 32 & 64 & 128 & 256 & 512 & 1024\\  
        \hline\\[-7pt]
        \method & \textbf{83.2} & \textbf{83.8} & \textbf{86.6} & \textbf{87.4} & \textbf{89.4} & \textbf{89.8} & \textbf{90.5} \\
        DeepG3D & 75.4 & 78.4 & 79.1 & 69.4 & 57.5 & 42.8 & 32.3 \\
        \bottomrule
    \end{tabular}
    }
    \vspace{.05in}
    \end{subtable}
    \begin{subtable}{\linewidth}
    \caption{Certified accuracy of \method under $\theta=\pm 180^\circ$}
    \centering
    \label{table:3b}
    \resizebox{0.9\linewidth}{!}{
    \begin{tabular}{lccccccc}
    \toprule
        Points &16 & 32 & 64 & 128 & 256 & 512 & 1024\\  
        \hline\\[-7pt]
        \method & 73.6 & 79.3 & 81.3 & 81.8 & 83.0 & 84.6 & 83.8\\
        \bottomrule
    \end{tabular}
    }
    \end{subtable}
    \end{small}
    \vskip -0.2in
\end{table}

\subsubsection{Certification against $\ell_p$ norm Bounded 3D-Perturbations}

In addition to semantic transformations, we also provide robustness certification for point cloud models against $\ell_p$ perturbations. Defenses have been proposed for point cloud models against $\ell_2$ norm bounded perturbations \cite{Marc2021}, which is a special case of our \method framework regarding an additive transformation $\phi(x,z) = x+z$.


We cannot directly certify against perturbations with bounded $\ell_\infty$ norm. However, a certification bound similar to the baseline \cite{DeepG3D} can still be derived, using the loose inequality $\|\theta\|_\infty \leq \sqrt{3N}\|\theta\|_2$. Here, we exhibit the certified accuracy for bounded $\ell_2$ norm in \cref{table:5} as a benchmark; and omit more details for $\ell_\infty$ norm to \cref{appendix:ellp}. We show that under the $\ell_2$ norm bounded 3D-perturbations, \method certifies even better as the point clouds sizes increase and achieve a high certified accuracy of $77.3\%$ for perturbations with $\ell_2$ norm bounded by 0.1.

\begin{table}[tb]
\vspace{-0.1in}
\centering
\caption{Certified accuracy of \method for point cloud models under $\ell_2$ attacks. The certified accuracy increases as the size of point cloud models increases.}
\vskip 0.15in
\label{table:5}
\resizebox{0.7\linewidth}{!}{
\begin{small}
    \begin{tabular}{ccccc}
    \toprule
    \multirow{2}{*}{Attack} & \multirow{2}{*}{Radius} & \multicolumn{3}{c}{Certified Accuracy ($\%$)}  \\
    \cline{3-5}\\[-7pt]
    & & 16 & 64 & 256 \\
    \midrule
    $\ell_2$ & 0.05 & 74.1 & 82.2 & {84.2} \\
    $\ell_2$ & 0.1 & 61.9 & 70.8 & {77.3} \\
    \bottomrule
    \end{tabular}
    \end{small}
}
\vskip -0.2in
\end{table}

\subsubsection{Certification for Part Segmentation}
\label{part-segmentation}
Part segmentation is a common 3D recognition task in which a model is in charge of assigning each point or face of a 3D mesh to one of the predefined categories. As our \method framework is independent of concrete model architectures, it can be naturally extended to handle this task.

We evaluate our method using the ShapeNet part dataset \cite{shapenet-dataset}. We train a segmentation version PointNet \cite{pointnet} with 64 points, which predicts a part category for each point in the point cloud. The certified accuracy reported in \cref{table:6} denotes the percentage of points guaranteed to be labeled correctly.
We can see that for the part segmentation task, \method consistently outperforms the baseline against different semantic transformations. The baseline only reports the result for z-rotations in $\pm 5^\circ$ and $\pm 10^\circ$, while we present robustness guarantees for any z-rotation ($\pm 180^\circ$) as well as other transformations including shearing and twisting.

\begin{table}[tb]
    \caption{Comparison of point-wise certified accuracy for the \textit{part segmentation} task. ``-" denotes the settings that the baseline does not consider or cannot scale up to. }
    \label{table:6}
    \vskip 0.15in
    \centering
    \resizebox{0.8\linewidth}{!}{
    \begin{small}
    \begin{tabular}{lccc}
    \toprule
        \multirow{2}{*}[-2pt]{Transformation} & \multirow{2}{*}[-2pt]{Radius} & \multicolumn{2}{c}{Certified Accuracy ($\%$)}\\
        \cline{3-4}\\[-7pt]
        & & \method & DeepG3D \\
    \midrule
        Z-rotation & 5$^\circ$& \textbf{87.8} & 85.7\\
        Z-rotation & 10$^\circ$ & \textbf{86.1} & 84.8\\
        Z-rotation & 180$^\circ$ & \textbf{70.8} & -\\
        Z-shear & 0.2 & \textbf{86.1} & - \\
        Z-twist & 180$^\circ$ & \textbf{74.5} & -\\
    \bottomrule
    \end{tabular}
    \end{small}
    }
    \vskip -0.2in
\end{table}

\section{Conclusion}
In this work, we propose a unified certification framework \method for point cloud models against a diverse range of semantic transformations. Our theoretical and empirical analysis show that \method is more scalable and able to provide much tighter certification under different settings and tasks. 

\section*{Acknowledgements}
The authors thank the anonymous reviewers for their valuable feedback. 
This work is partially supported by 
 NSF grant No.1910100, NSF CNS No.2046726, C3 AI, and the Alfred P. Sloan Foundation.
WC would like to thank the support from Institute for Interdisciplinary Information Sciences, Tsinghua University.

\bibliography{reference}
\bibliographystyle{icml2022}

\newpage
\appendix
\onecolumn

\section{Generic Theorems for Composable and Indirectly Composable Transformations}
\subsection{Main Theorem for Transformation Specific Smoothing}
\label{appendix:generic}
\begin{theorem}[Theorem 1 \cite{TSS}]
\label{thm:generic}
Let $\epsilon_0\sim \mathbb P_0$ and $\epsilon_1 \sim \mathbb P_1$ be $\mathcal Z$-valued random variables with probability density function $f_0$ and $f_1$. Let $\phi:\mathcal X\times \mathcal Z\to \mathcal X$ be a semantic transformation. Suppose a classifier smoothed by the transformation $\phi$ predicts $y_A = g(x;\epsilon)$, and that
\begin{equation}
    q(y_A|x,\epsilon) \geq p_A > p_B \geq \max_{y\neq y_A} q(y|x,\epsilon).
    \label{class_probability}
\end{equation}
For $t\geq 0$, we define sets $\underline S_t, \overline S_t\subseteq Z$ by $\underline S_t := \{f_1/f_0 < t\}$ and $\overline S_t := \{f_1/f_0 \leq t\}$. Also define a function $\xi: [0,1]\to [0,1]$ by
\begin{align}
    \xi(p) &:= \sup\{\mathbb P_1(S) : \underline S_{\tau_p} \subseteq S\subseteq \overline S_{\tau_p}\}\\
    \text{where}\ &\tau_p := \inf\{t\geq 0: \mathbb P_0(\overline S_t)\geq p\}.
\end{align}
Then it is guaranteed that $g(x;\epsilon_1) = g(x;\epsilon_0)$ if the following condition holds:
\begin{equation}
\label{eq:generic-condition}
    \xi(p_A) + \xi(1-p_B) > 1.
\end{equation}
\end{theorem}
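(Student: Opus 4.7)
The plan is to reduce the theorem to two applications of the Neyman--Pearson lemma: one for the favored class $y_A$, and one for every competitor class $y \neq y_A$. The unifying observation is that for any fixed $y$, the score $h_y(z) := p(y | \phi(x,z))$ is a measurable $[0,1]$-valued function of $z$, and $q(y | x,\epsilon) = \int h_y(z)\, f(z)\,dz$ for the appropriate density $f \in \{f_0,f_1\}$. Bounding $q(y|x,\epsilon_1)$ under a constraint on $q(y|x,\epsilon_0)$ therefore fits the Neyman--Pearson setting exactly: minimize (or maximize) an integral against $f_1$ subject to a constraint on the integral against $f_0$, with the integrand constrained to $[0,1]$.

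First I would prove the lower bound for $y_A$. By Neyman--Pearson, $\min \int h\, f_1\,dz$ over measurable $h:\mathcal Z \to [0,1]$ with $\int h\, f_0\,dz \geq p_A$ is attained by a likelihood-ratio test that puts full weight on the low-ratio region $\{f_1/f_0 < t\}$, zero weight on the high-ratio region $\{f_1/f_0 > t\}$, and, if needed, fractional weight on the boundary $\{f_1/f_0 = t\}$. The threshold $t$ is exactly $\tau_{p_A}$, chosen as the smallest value at which the $\mathbb P_0$-mass of the accepted region reaches $p_A$. The set-level formulation in the theorem, $\underline S_{\tau_p} \subseteq S \subseteq \overline S_{\tau_p}$ together with the sup over $\mathbb P_1(S)$, is precisely how one reads off this optimum when the boundary atom is handled by selecting the worst admissible subset; the resulting value is $\xi(p_A)$, giving $q(y_A|x,\epsilon_1) \geq \xi(p_A)$.

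Next I would upper-bound $q(y|x,\epsilon_1)$ for each $y \neq y_A$. Applying the same argument to the complementary score $1 - h_y$, which is still $[0,1]$-valued, and using $\int (1-h_y)\, f_0\,dz \geq 1-p_B$, yields $\int (1-h_y)\, f_1\,dz \geq \xi(1-p_B)$, hence $q(y|x,\epsilon_1) \leq 1-\xi(1-p_B)$. The smoothed prediction under $\epsilon_1$ remains $y_A$ as soon as the lower bound for $y_A$ strictly exceeds the upper bound for every other class, i.e., $\xi(p_A) > 1-\xi(1-p_B)$, which rearranges to the stated condition $\xi(p_A) + \xi(1-p_B) > 1$.

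The main obstacle I anticipate is not conceptual but the careful handling of atoms of the likelihood ratio at $\tau_p$, which is exactly what the set-sandwich definition of $\xi$ absorbs. In a fully randomized Neyman--Pearson setup one would place a fractional weight on $\{f_1/f_0 = \tau_p\}$; the sup over admissible sets $S$ between $\underline S_{\tau_p}$ and $\overline S_{\tau_p}$ is the deterministic analogue and yields the correct worst-case bound. Everything else is routine bookkeeping: measurability of the level sets, verifying that $\tau_p$ is well-defined via right-continuity of $t \mapsto \mathbb P_0(\overline S_t)$, and checking that the relevant infimum and supremum are attained or approached so the classical form of the lemma applies. No heavier machinery is needed.
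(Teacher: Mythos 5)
This theorem is not proved in the paper at all: it is restated from~\cite{TSS} (their Theorem~1), and the appendix explicitly defers to that reference for the argument. Your sketch is indeed the route taken there: $q(y_A\mid x,\epsilon_1)$ is lower-bounded by a Neyman--Pearson likelihood-ratio test against the constraint $q(y_A\mid x,\epsilon_0)\ge p_A$, and $q(y\mid x,\epsilon_1)$ for $y\neq y_A$ is upper-bounded by applying the same argument to $1-h_y$ under the constraint $1-q(y\mid x,\epsilon_0)\ge 1-p_B$, after which the prediction-invariance condition $\xi(p_A)>1-\xi(1-p_B)$ drops out. So conceptually you have the right proof, and it is essentially the cited one.

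One caution worth flagging, since it bears directly on the atom-handling step you already identified as the main technical point: as reprinted in this paper, the definition of $\xi$ reads $\xi(p)=\sup\{\mathbb P_1(S):\underline S_{\tau_p}\subseteq S\subseteq\overline S_{\tau_p}\}$, which trivially equals $\mathbb P_1(\overline S_{\tau_p})$ and can \emph{exceed} the Neyman--Pearson minimum when the boundary set $\{f_1/f_0=\tau_p\}$ carries positive $\mathbb P_0$-mass (where the optimal test uses a fractional weight $\lambda<1$). The original TSS definition includes the additional constraint $\mathbb P_0(S)\le p$, which is what pins the admissible $S$ down to the correct worst-case value; with that constraint one has $\mathbb P_1(S)=\mathbb P_1(\underline S_{\tau_p})+\tau_p\,\mathbb P_0(S\setminus\underline S_{\tau_p})\le\mathbb P_1(\underline S_{\tau_p})+\tau_p\bigl(p-\mathbb P_0(\underline S_{\tau_p})\bigr)$, which matches the randomized NP optimum. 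When you write out the ``bookkeeping,'' be sure to work with the constrained version (or directly with the fractional test); otherwise the claimed lower bound $q(y_A\mid x,\epsilon_1)\ge\xi(p_A)$ does not actually hold.
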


Intuitively, $\xi(p_A)$ computes a lower bound for the class probability of $y_A$ when the smoothing distribution changes from $\epsilon_0$ to $\epsilon_1$. Suppose we want to certify an $\epsilon_0$-smoothed classifier against an \resolve transformation $\phi$ and $\phi(\phi(x,\alpha),\beta) = \phi(x,\gamma_\alpha(\beta))$. For any attack $\alpha\in \mathcal Z$, we assign $\epsilon_1 = \gamma_\alpha(\epsilon_0)$ and check for the condition of \cref{eq:generic-condition}. Moreover, if $\phi$ is additive and $\epsilon_0$ is a Gaussian random variable, we have the following corollary:
\begin{corollary}[Corollary 7 \cite{TSS}]
\label{additive+gaussian}
Let $\phi:\mathcal X \times \mathcal Z \to \mathcal X$ be an additive transformation and $\mathcal Z = \mathbb R^m$. Suppose classifier $g$ is smoothed by a random variable $\epsilon_0\sim \mathcal N(0,\diag(\sigma_1^2, \dots, \sigma_m^2))$. Assume that the class probability satisfies:
\begin{equation}
    q(y_A|x,\epsilon_0) \geq p_A > p_B \geq \max_{y\neq y_A}(y|x,\epsilon_0).
\end{equation}
Then it holds that $g(x;\epsilon_0) = g(\phi(x,\alpha);\epsilon_0)$ if the attack parameter $\alpha$ satisfies:
\begin{equation}
    \sqrt{\sum_{i=1}^m \left(\frac{\alpha_i}{\sigma_i}\right)^2} < \frac{1}{2}\Big(\Phi^{-1}(p_A) - \Phi^{-1}(p_B)\Big).
\end{equation}
\end{corollary}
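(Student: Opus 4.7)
The plan is to reduce the claim to an application of \cref{thm:generic} by exploiting the additive structure together with the rotational/shift invariance of the Gaussian likelihood ratio. First I would use the defining identity of an additive transformation, $\phi(\phi(x,\alpha),\beta)=\phi(x,\alpha+\beta)$, to rewrite the predicted class probability of the shifted point as
\begin{equation*}
q(y \mid \phi(x,\alpha),\epsilon_0)
= \mathbb{E}_{\epsilon_0}\bigl[p(y \mid \phi(\phi(x,\alpha),\epsilon_0))\bigr]
= \mathbb{E}_{\epsilon_0}\bigl[p(y \mid \phi(x,\alpha+\epsilon_0))\bigr]
= q(y \mid x,\epsilon_1),
\end{equation*}
where $\epsilon_1 := \alpha+\epsilon_0 \sim \mathcal{N}(\alpha,\mathrm{diag}(\sigma_1^2,\dots,\sigma_m^2))$. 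Consequently, certifying $g(\phi(x,\alpha);\epsilon_0)=g(x;\epsilon_0)$ is exactly the scenario handled by \cref{thm:generic} with $\mathbb{P}_0=\mathcal{N}(0,\Sigma)$ and $\mathbb{P}_1=\mathcal{N}(\alpha,\Sigma)$ where $\Sigma=\mathrm{diag}(\sigma_1^2,\dots,\sigma_m^2)$.

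Next I would compute the function $\xi$ for this pair of Gaussians. The log-likelihood ratio
\begin{equation*}
\log \frac{f_1(z)}{f_0(z)} = \sum_{i=1}^m \frac{\alpha_i z_i}{\sigma_i^2} - \frac{1}{2}\sum_{i=1}^m \frac{\alpha_i^2}{\sigma_i^2}
\end{equation*}
is an affine function of $z$, so each set $\overline{S}_t=\{f_1/f_0 \leq t\}$ is a halfspace whose normal direction is $(\alpha_i/\sigma_i^2)_i$. The usual change of variables $u_i = z_i/\sigma_i$ converts the problem to an isotropic standard Gaussian computation and reveals that the statistic $T(z) := \sum_{i}\alpha_i z_i/\sigma_i^2$ is Gaussian with variance $\|\alpha/\sigma\|_2^2 := \sum_i (\alpha_i/\sigma_i)^2$ under both $\mathbb{P}_0$ and $\mathbb{P}_1$, differing only by a mean shift of $\|\alpha/\sigma\|_2^2$. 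A short standard-Gaussian calculation then yields the closed form
\begin{equation*}
\xi(p) = \Phi\!\left(\Phi^{-1}(p) - \|\alpha/\sigma\|_2\right),
\end{equation*}
which is the direct analogue of the scalar computation in Cohen et al.~(2019) adapted to a diagonal-covariance Gaussian.

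Finally I would substitute this $\xi$ into the abstract robustness condition $\xi(p_A)+\xi(1-p_B)>1$ from \cref{thm:generic}. Rearranging using $\Phi(a)+\Phi(b)>1 \iff a+b>0$ (and the identity $\Phi^{-1}(1-p_B)=-\Phi^{-1}(p_B)$) reduces the condition to
\begin{equation*}
\|\alpha/\sigma\|_2 < \tfrac{1}{2}\bigl(\Phi^{-1}(p_A) - \Phi^{-1}(p_B)\bigr),
\end{equation*}
which is precisely the stated bound. The main obstacle, in the sense of being the only non-trivial step, is the closed-form evaluation of $\xi$; everything else is a direct substitution. The anisotropy is handled cleanly by the coordinate rescaling $u_i=z_i/\sigma_i$, after which the argument is identical to the isotropic Gaussian smoothing calculation, so no genuinely new difficulty arises beyond bookkeeping.
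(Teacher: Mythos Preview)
Your proposal is correct and follows exactly the approach one would expect: reduce to \cref{thm:generic} via the additive identity $\epsilon_1=\alpha+\epsilon_0$, compute the Gaussian likelihood ratio, identify the level sets as halfspaces, and evaluate $\xi$ in closed form by the Cohen-style calculation. The paper itself does not give a proof of this corollary but directs the reader to \cite{TSS}, and your argument is essentially the same as the one found there.
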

We direct readers to \cite{TSS} for the rigorous proof on \cref{thm:generic} and \cref{additive+gaussian} .

\subsection{Theorem for Certifying Indirectly Composable Transformations}
\label{appendix:TSS-DR}
\begin{theorem}[Corollary 2 \cite{TSS}]
\label{thm:TSS-DR}
Let $\psi(x,\delta) = x + \delta$ and $\epsilon \sim \mathcal N(0,\sigma^2\mathds 1_d)$. $\phi:\mathbb X \times \mathcal Z_\phi \to \mathcal X$ is a \difresolve transformation. Construct a smoothed classifier with additive noise $\psi(x,\delta)$ and suppose it predicts $y = \arg\max_{y\in \mathcal Y} q(y|x;\epsilon)$. Draw $N$ samples $\{\alpha_i\}_i^N$ from a set $\mathcal S \subseteq Z_\phi$. Assume 
\begin{equation}
    q(y_A|\phi(x,\alpha_i),\epsilon)\geq p_A^{(i)} \geq p_B^{(i)} \geq \max_{y\neq y_A}q(y|\phi(x,\alpha_i),\epsilon).
\end{equation}

Then it is guaranteed that $\forall \alpha\in \mathcal S: y_A = \arg\max_{y\in \mathcal Y} q(y|\phi(x,\alpha);\epsilon)$ if the maximum interpolation error
\begin{align}
\label{eq:interpolation}
    \mathcal M_{\mathcal S} &:= \max_{\alpha\in \mathcal S}\min_{1\leq i\leq N}\mathcal M(\alpha, \alpha_i)\\
    &= \max_{\alpha\in \mathcal S}\min_{1\leq i\leq N}\|\phi(x,\alpha)-\phi(x,\alpha_i)\|_2\\
        \text{satisfies}\ &\mathcal M_{\mathcal S} < R := \frac{\sigma}{2}\min_{1\leq i\leq N} \Big(\Phi^{-1}(p_A^{(i)}) - \Phi^{-1}(p_B^{(i)})\Big)
\end{align}
\end{theorem}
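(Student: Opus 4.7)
The plan is to apply \cref{thm:TSS-DR} (the generic certification bound for indirectly composable transformations) with the ambient additive transformation $\psi(x,\delta) = x + \delta$ and with the sample grid $\{z_{jkl}\}$ as an $\ell_\infty$-cover of $\mathcal{Z}_{robust}$. Since the grid spacing is $2/M$ in each coordinate, every $z \in \mathcal{Z}_{robust}$ lies within distance $1/M$ of some $z_{jkl}$ coordinate-wise. The whole argument therefore reduces to bounding the interpolation error
\[
\mathcal{M}(z, z_{jkl}) = \|\phi(x,z) - \phi(x, z_{jkl})\|_2
\]
uniformly by $\frac{\sqrt{N(1 + \tfrac{27}{4}(1+\alpha)^2)}}{2M}$, after which \cref{thm:TSS-DR} delivers the claimed conclusion directly.

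First I would write the composite in cylindrical coordinates. Representing $p_i = (r_i\cos\omega_i, r_i\sin\omega_i, z_i)$, the action of $\phi_{Tz} \circ \phi_{TP} \circ \phi_{Rot\text{-}z}$ is
\[
\phi(p_i;\varphi,\alpha,\theta) = \bigl(r_i(1+\alpha z_i)\cos(\omega_i + \theta + \varphi z_i),\; r_i(1+\alpha z_i)\sin(\omega_i + \theta + \varphi z_i),\; z_i\bigr),
\]
so the $z$-component is invariant and the error lives entirely in the $(x,y)$-plane. Differentiating, $\partial_\varphi \phi$ and $\partial_\theta \phi$ are both tangential with magnitudes $r_i z_i(1+\alpha z_i)$ and $r_i(1+\alpha z_i)$, while $\partial_\alpha \phi$ is radial with magnitude $r_i z_i$. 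Applying the mean-value inequality along the straight line from $z_{jkl}$ to $z$ in parameter space (which stays inside $\mathcal{Z}_{robust}$ by convexity) and exploiting orthogonality of radial and tangential components,
\[
\|\phi(p_i, z) - \phi(p_i, z_{jkl})\|_2^2 \;\le\; r_i^2(1+\alpha z_i)^2 (z_i\, d\varphi + d\theta)^2 + r_i^2 z_i^2\, d\alpha^2.
\]

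Next I would bound each factor uniformly using $|d\varphi|, |d\alpha|, |d\theta| \le 1/M$, the parameter bound $|\alpha| \le \alpha$, and the normalization $\|p_i\|_2 \le 1$. Specifically $(1+\alpha z_i)^2 \le (1+\alpha)^2$, $(z_i\, d\varphi + d\theta)^2 \le (1+|z_i|)^2/M^2$, and $d\alpha^2 \le 1/M^2$, giving
\[
\|\phi(x,z) - \phi(x,z_{jkl})\|_2^2 \;\le\; \frac{1}{M^2}\sum_{i=1}^N \Bigl[(1+\alpha)^2 r_i^2(1+|z_i|)^2 + r_i^2 z_i^2\Bigr].
\]
Under $r_i^2 + z_i^2 \le 1$, putting $s = |z_i|$ yields $r_i^2(1+s)^2 \le (1-s^2)(1+s)^2 = (1-s)(1+s)^3$, whose maximum on $[0,1]$ is $27/16$ at $s=1/2$; similarly $r_i^2 z_i^2 \le 1/4$ at $s = 1/\sqrt 2$. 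Adding gives $\tfrac{27}{16}(1+\alpha)^2 + \tfrac{1}{4} = \tfrac{1}{4}\bigl(1 + \tfrac{27}{4}(1+\alpha)^2\bigr)$, so
\[
\mathcal{M}(z, z_{jkl}) \;\le\; \frac{\sqrt{N\bigl(1 + \tfrac{27}{4}(1+\alpha)^2\bigr)}}{2M},
\]
which matches the stated threshold exactly. Plugging this uniform bound into \cref{thm:TSS-DR} closes the proof.

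I expect the delicate step to be the one-variable optimization yielding the constant $27/16$: a naive bound such as $r_i^2(1+|z_i|)^2 \le 4$ would loosen the certificate substantially. A secondary subtlety is that $d\varphi$ and $d\theta$ contribute to the \emph{same} tangential direction, so the inequality $(z_i d\varphi + d\theta)^2 \le (1+|z_i|)^2/M^2$ must be applied after combining these two derivatives; bounding them separately would cost roughly a factor of $2$ and produce a strictly weaker constant.
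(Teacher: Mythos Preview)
Your argument (which is a proof of \cref{thm:twist+taper+rotation}, not of the cited \cref{thm:TSS-DR} itself) is correct and is essentially the paper's proof in \cref{appendix:twist+taper+rotation}: both reduce to bounding $\mathcal M(z,z_{jkl})$ per point and then invoke \cref{thm:TSS-DR}, and both extract the same constants via the one-variable maxima $r_i^2 z_i^2\le \tfrac14$ and $r_i^2(1+|z_i|)^2\le \tfrac{27}{16}$. The only difference is packaging: the paper obtains your displayed per-point inequality directly from the law of cosines and $1-\cos u\le u^2/2$, whereas you phrase it as ``mean-value inequality plus radial/tangential orthogonality''---a slightly loose description, but the inequality you write is exactly what the law-of-cosines step yields, so there is no gap.
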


\section{Proofs for Certifying Specific Transformations}

Here, we present proofs for the theorems and corollaries proposed in \cref{section:4}, including concrete protocols for common 3D transformations, such as z-rotation, z-twist, z-taper, etc.

\subsection{Proof of \cref{thm:taper}: Certifying Z-taper}
\label{appendix:taper}
\emph{Proof.} The z-taper transformation is defined as $\phi_{TP}:\mathcal X \times \mathcal \mathbb R \to \mathcal X$ where $\mathcal X = \mathbb R^{3\times N}$ is the space for input point clouds. A z-taper transformation acting on a point cloud $x = \{p_i\}_{i=1}^N\in \mathcal X$ in fact performs point-wise transformation to each $p_i$, where
\begin{equation}
    \phi_{TP}(p_i,\theta) = \begin{pmatrix}
    x_i(1+\theta z_i)\\
    y_i(1+\theta z_i)\\
    z_i
    \end{pmatrix},\ \text{if}\ p_i = (x_i,y_i,z_i)^T.
\end{equation}
We calculate the interpolation error between two parameters $\theta$ and $\theta_j$ by
\begin{align}
    \mathcal M(\theta, \theta_j) &= \|\phi_{TP}(x,\theta) - \phi_{TP}(x,\theta_j)\|_2\\
    &= \left(\sum_{i=1}^N \|\phi_{TP}(p_i,\theta) - \phi_{TP}(p_i,\theta_j)\|_2^2\right)^{1/2}\\
    &= \left(\sum_{i=1}^N (x_i^2 + y_i^2) z_i^2(\theta-\theta_j)^2\right)^{1/2}\\
    &\leq \frac{\sqrt{N}|\theta - \theta_j|}{2}.
\end{align}
The last inequality holds because we assume point clouds are normalized into a unit ball, so $(x_i^2 + y_i^2) z_i^2 \leq \frac{1}{4}$. Recall that we choose $\theta_j = (\frac{2j}{M} - 1)R$ and $j = 0,1,\dots, M$. Hence, $
    \max_{\theta\in [-R,R]}\min_{j} |\theta - \theta_j| < \frac{R}{M}.$
The maximal interpolation error is thus bounded by
\begin{align}
    \mathcal M_S &= \max_{\theta\in [-R,R]}\min_j \mathcal M(\theta, \theta_j)\\
    &\leq \frac{R\sqrt{N}}{2M}.
\end{align}
According to \cref{thm:TSS-DR}, it is guaranteed for all $\theta \in [-R,R]$ that $y_A = \arg\max_y q(y|\phi_{TP}(x,\theta);\epsilon)$, if $\ \forall j$,
\begin{equation}
    \frac{\sigma}{2}\Big(\Phi^{-1}\Big(p_A^{(j)}\Big) - \Phi^{-1}\Big(p_B^{(j)}\Big) \Big) \geq \frac{R\sqrt{N}}{2M}.
\end{equation}

\subsection{Proof of \cref{thm:general_rotation}: Certifying General Rotation}
\label{appendix:general_rot}
\emph{Proof.} We first recall the definition of general rotation: $\phi_R:\mathcal X \times \mathcal Z\to \mathcal X$. The parameter space $\mathcal Z = S^2 \times \mathbb R^+$ where $S^2$ characterizes the rotation axis and $\mathbb R^+$ stands for the rotation angle. By Euler's theorem, general rotations are \resolve transformations; and the composition of two rotations can be expressed by:
\begin{align}
    &\phi_R(\phi_R(x,z_1),z_2) = \phi_R(x,z_3)\\
    \text{where } &\begin{cases}
    k_3 = \text{normalize}(\sin\frac{\theta_1}{2}\cos\frac{\theta_2}{2}k_1 + \cos\frac{\theta_1}{2}\sin\frac{\theta_2}{2}k_2 + \sin\frac{\theta_1}{2}\sin\frac{\theta_2}{2}k_2 \times k_1)\\
    \theta_3 = 2\arccos{(\cos\frac{\theta_1}{2}\cos\frac{\theta_2}{2}-\sin\frac{\theta_1}{2}\sin\frac{\theta_2}{2}k_1\cdot k_2)}
    \end{cases}
\end{align}
The interpolation error of a point cloud $x = \{p_i\}_{i=1}^N$ between two transformations with parameters $z = (k,\theta)$ and $z_j = (k_j,\theta_j)$ is bounded by:
\begin{align}
    \mathcal M(z,z_j) &= \|\phi_R(x,z) - \phi_R(x,z_j)\|_2\\
    &= \|\phi_R(\phi_R(x,z),z_j^{-1}) - x\|_2, \quad (\text{where}\ z_j^{-1} = (-k_j, \theta_j))\\
    &= \left(\sum_{i=1}^N \|\phi_R(\phi_R(p_i,z), z_j^{-1}) - p_i\|_2^2 \right)^{1/2}\\
    &= \left(\sum_{i=1}^N \|\phi_R(p_i, z^\prime) - p_i\|_2^2\right)^{1/2}\quad (\text{Let}\ z^\prime = (k^\prime, \theta^\prime) \ \text{be the composition of}\ z,z_j^{-1}) \\
    &\leq \left(\sum_{i=1}^N (\theta^{\prime} \|p_i\|_2 )^2\right)^{1/2} = \theta^{\prime} \|x\|_2. \label{eq:45}
\end{align}

Assuming $\langle k,k_i\rangle\leq \epsilon$, $|\theta-\theta_i| \leq \delta$, we derive that for $\theta,\theta_i\in [0,R]$,
\begin{align}
    \cos \frac{\theta^\prime}{2} &= \cos \frac{\theta_j}{2}\cos \frac{\theta}{2} + \cos \langle k,k_i\rangle \sin \frac{\theta_j}{2}\sin \frac{\theta}{2}, &&(\langle k,k_i\rangle \leq \epsilon)\\
    &\geq \cos \frac{\theta_j - \theta}{2} - \frac{\epsilon^2}{2}\sin \frac{\theta_j}{2}\sin \frac{\theta}{2} &&(\text{since}\ \cos \epsilon \geq 1 - \frac{\epsilon^2}{2})\\
    &\geq 1 - \left(\frac{\theta_j - \theta}{2}\right)^2 - \frac{\epsilon^2 \theta\theta_j}{8}.&& (\text{since}\ \sin x \leq x)\\
    &\geq 1 - \frac{\delta^2}{4} - \frac{\epsilon^2 R^2}{8}.
\end{align}
Note that $\arccos(1-x) \leq \frac{\pi}{2}\sqrt{x}$ when $x\in [0,1]$, we have
\begin{align}
    \theta^\prime &\leq 2\arccos\left(1 - \frac{\delta^2}{4} - \frac{\epsilon^2 R^2}{8}\right)\\
    &\leq \pi \sqrt{\frac{\delta^2}{4}+\frac{\epsilon^2 R^2}{8}}.\label{eq:51}
\end{align}
Combining \cref{eq:45} and \cref{eq:51}, the maximal interpolation error for $z\in S^2\times [0,R]$ satisfies
\begin{align}
\label{eq:inter_err_rot}
    \mathcal M_S &= \max_{z}\min_{j}\mathcal M(z,z_j)\\
    &\leq \pi \sqrt{\frac{\delta^2}{4}+\frac{\epsilon^2 R^2}{8}} \|x\|_2.
\end{align}
\cref{thm:general_rotation} thus holds combining \cref{eq:inter_err_rot} with \cref{thm:generic}.

Moreover, we specify a sampling strategy to satisfy the condition of \cref{eq:rotation_condition}.
\begin{itemize}
\setlength{\parskip}{0pt}
    \item Uniformly sample $\pi M$ number of $a_r\in [0,\pi]$.
    \item For each $a_r$, uniformly sample $2\pi M \sin\theta_s$ points $b_{rs}\in [0,2\pi]$.
    \item Uniformly sample $M$ number of $\theta_t \in [0,R]$.
    \item Draw $O(M^3)$ samples in total: $z_j = (k_j,\theta_t)$ with $k_j = (\cos b_{rs}\sin a_r, \sin b_{rs}\sin a_r, \cos a_r)$.
\end{itemize}
Following this strategy, the sampled parameters distribute evenly in the subspace of $S^2 \times [0,R]$, which guarantees $\epsilon = \frac{\sqrt 2}{2M}$ and $\delta = \frac{R}{2M}$ for the conditions in \cref{eq:rotation_condition}.

To sum up, it is guaranteed that for all $z\in S^2 \times [0,R], x\in \mathcal X: y_A = \arg\max_y q(y|\phi_R(x,z);\epsilon)$, if $\forall j$,
\begin{equation}
    \frac{\sigma}{2}\Big(\Phi^{-1}\Big(p_A^{(j)}\Big) - \Phi^{-1}\Big(p_B^{(j)}\Big) \Big) \geq \frac{\sqrt{2}\pi R \|x\|_2}{4M}.
\end{equation}

\begin{remark}
In practise, we implement a tighter bound that $\arccos(1-x) \leq \sqrt{2x} + (\frac{\pi}{2}-\sqrt{2})x^{\frac{3}{2}}$ for $x\in [0,1]$.
\end{remark}

\subsection{From General Rotation to ZYX-rotation}
\label{appendix:zyx}
ZYX-rotation, the composition of three rotations along $x,y$ and $z$ axes, is defined by: $\phi_{ZYX-rot}:\mathcal X\times \mathcal Z \to \mathcal X$ with parameter space $\mathcal Z = \mathbb R^3$. Specifically, for $z = (\alpha, \beta, \gamma)\in \mathcal Z$ and $x = \{p_i\}_{i=1}^N\in \mathcal X$,
\begin{equation}
    \phi_{ZYX-rot}(p_i, z) = R_z(\gamma) R_y(\beta) R_x(\alpha) p_i,\ \text{where } R_z, R_y, R_x\ \text{are the rotation matrix along x,y,z axes.}
\end{equation}
Note that the rotation angle for any rotation matrix $R$ can be calculated by:
\begin{equation}
    |\theta| = \arccos\left(\frac{tr(R)-1}{2}\right)
\end{equation}
The trace of the rotation matrix for ZYX-rotation is
\begin{equation}
    f(\alpha,\beta,\gamma) = tr(R_z(\gamma)R_y(\beta)R_x(\alpha)) = \cos \alpha\cos\beta + \cos \alpha\cos\gamma + \cos\beta\cos\gamma - \sin\alpha\sin\beta\sin\gamma.
\end{equation}

We assume $\alpha,\beta,\gamma \in [-\frac{\pi}{2},\frac{\pi}{2}]$. $\frac{\partial f}{\partial \alpha} = \frac{\partial f}{\partial \beta} = \frac{\partial f}{\partial \gamma} = 0$ yields $\alpha = \beta = \gamma = 0, \pm\frac{\pi}{2}$. Therefore, for $\alpha, \beta, \gamma \in [-\varphi, \varphi]$ with $\varphi \in [0,\frac{\pi}{2}]$, the minimum of $f(\alpha, \beta,\gamma)$ can only be on $\alpha, \beta, \gamma = \pm \varphi$ or $\alpha = \beta = \gamma = 0$. Since $\alpha = \beta = \gamma = 0$ yields the maximum $f(\alpha, \beta,\gamma)$, we have
\begin{equation}
    \min_{\alpha,\beta,\gamma\in [-\varphi,\varphi]} f(\alpha,\beta,\gamma) = 3\cos^2\varphi - \sin^3\varphi
\end{equation}
Thus,
\begin{align}
    \cos\theta &= \frac{tr(R)-1}{2} \\
    &\geq \frac{3\cos^2\varphi - \sin^3 \varphi - 1}{2}\\
    &= \frac{(2\cos^2 \varphi - 2\sin^2 \varphi )+ (\sin^2 \varphi - \sin^3\varphi)}{2}\\
    &\geq \cos 2\varphi.
\end{align}
The rotation angle $\theta$ is thus bounded by $\theta \leq 2\varphi$. Hence, any transformation $\phi_{ZYX-rot}$ with $z\in [-\theta, \theta]^3$ and $\theta \in [0,\pi/2]$ belongs to the set of general rotations $\phi_R$ with parameter space $\mathcal Z_R = S^2 \times [0,2\theta]$.

\subsection{Proof of \cref{thm:linear}: Certifying Linear Transformations}
\begin{proof}
The set of linear transformations is defined by $\phi_L: \mathcal X \times \mathcal Z \to \mathcal X$ where the parameter space is $\mathcal Z = \mathbb R^{3\times 3}$ and $\phi_L(p_i, A) = (I + A)p_i$. The composition of two linear transformations can be expressed by
\begin{align}
    \phi(B)\phi(A) &= I + A + B + BA\\
    &= I + A + (b_{ij} + \sum_{k} b_{ik}a_{kj})_{ij}
\end{align}
To certify these transformations, we smooth the classifier with $\phi(E)$, where $e_{ij} \sim \mathcal N(0,\sigma^2)$. The smoothed classifier is denoted as $g(x; E)$. Then the smoothed classifier can be viewed as smoothed additively by an equivalent random variable $\tilde E$.

\begin{align}
    &g(\phi(A, x); E) = g(x; A+\tilde E)
\end{align}

Suppose $E \sim \mathcal N (0, \sigma^2 \mathcal I_{9})$, then

\begin{equation}
\label{eq:tildeE}
    \tilde E =
    \begin{pmatrix}
    I + A^T & O & O \\
    O & I + A^T & O \\
    O & O & I + A^T
    \end{pmatrix} E = SE.
\end{equation}
The covariance matrix $\Sigma$ of $\tilde E$ is thus $\Sigma = SS^T$. This symmetric covariance matrix can be decomposed by $\Sigma = QDQ^T$. Suppose the singular values of $\Sigma$ are $k_i$. Then by the Mirsky Theorem for matrix perturbation, we have
\begin{equation}
    \sqrt{\Sigma_{i=1}^9 (k_i - 1)^2} \leq \|A\|_F
\end{equation}

Therefore, the problem is reduced to the diagonal case such that the covariance matrix is diagonal. Assume $\epsilon_0 \sim \mathcal (0, \sigma^2 I_{9})$ and $\epsilon_1 \sim \mathcal (0, \sigma^2 \cdot \mathrm {diag}(k_1^2, k_2^2, \dots, k_9^2))$. Let $A = \mathrm{diag} (k_1, k_2,\dots, k_9)$ and $\Sigma = \sigma^2 I_9$. Also, we assume without loss of generality by symmetry that $k_1 = k_2 = k_3, k_4 = k_5 = k_6$ and $k_7 =k_8 = k_9$.

Therefore, according to \cref{additive+gaussian}, the smoothed classifier is guaranteed to be robust under attack $A$, if
\begin{equation}
    \sqrt{\sum_{i=1}^9 \Big(\frac{a_{i}}{k_i}\Big)^2}\leq \frac{\sigma}{2}\Big(\Phi^{-1}(\tilde p_A) - \Phi^{-1}(\tilde p_B)\Big)
\end{equation}
where $\tilde p_A = q(y|x;\tilde E)$ given $p_A = q(y|x;E)$. Since $k_i \geq 1 - \|A\|_F,\forall i$, the condition can be simplified to
\begin{equation}
    \sqrt{\sum_{i=1}^9 a_i^2} = \|A\|_F \leq \frac{\sigma(1 - \|A\|_F)}{2}\Big(\Phi^{-1}(\tilde p_A) - \Phi^{-1}(\tilde p_B)\Big)
\end{equation}
which is equivalent to the following condition when $\|A\|_F \leq 1$.
\begin{equation}
    \|A\|_F \leq \frac{\sigma \Big(\Phi^{-1}(\tilde p_A) - \Phi^{-1}(\tilde p_B)\Big) }{2 + \sigma\Big(\Phi^{-1}(\tilde p_A) - \Phi^{-1}(\tilde p_B)\Big)}.
\end{equation}
\end{proof}

Next we find a functional relation between $\tilde p_A$ and $p_A$. To do so, we first state and prove a lemma.

\begin{lemma}
\label{lemma:B1}
Suppose a classifier $g$ is smoothed by a Gaussian random variable $\epsilon_0 \sim \mathcal N(0,\mathrm{diag}(\sigma_1^2,\dots, \sigma_n^2))$, while another classifier $g^\prime$ is smoothed by a Gaussian random variable $\epsilon_1\sim \mathcal N(0,\mathrm{diag}(k^2\sigma_1,\dots, k^2\sigma_{m}^2, \sigma_{m+1}^2,\dots, \sigma_{n}^2))$. If the smoothed classifier $g$ predicts $p_A = q(y_A|x;\epsilon_0)$, then
\begin{equation}
\label{chi-square}
    p_A^\prime = q(y|x;\epsilon_1) \geq \begin{cases}
    F_{\chi^2_m}(\frac{1}{k^2}F^{-1}_{\chi^2_m} (p_A))\quad (k\geq 1)\\
    1 - F_{\chi^2_m}(\frac{1}{k^2} F^{-1}_{\chi^2_m}(1-p_A)) \quad (k<1).
    \end{cases}
\end{equation}
\end{lemma}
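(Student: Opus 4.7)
The plan is to invoke the Neyman--Pearson lemma to reduce the inequality to a one-dimensional likelihood-ratio test, observe that the ratio $f_1/f_0$ depends only on the sufficient statistic $T(z):=\sum_{i=1}^m z_i^2/\sigma_i^2$, and identify $T$ as (a deterministic rescaling of) a $\chi^2_m$ random variable under each noise distribution.

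First I would apply the standard randomized-smoothing reduction underlying \cref{thm:generic}: since the base probabilities $p(y_A\mid\cdot)$ lie in $[0,1]$, the worst-case base classifier consistent with $q(y_A\mid x;\epsilon_0)\geq p_A$ is an indicator $\mathds 1_S$ of some measurable $S\subseteq\mathbb R^n$. Bounding $p_A^\prime$ from below then reduces to
\begin{equation}
\min_{S}\ \mathbb P(\epsilon_1\in S)\quad \text{s.t.}\quad \mathbb P(\epsilon_0\in S)\geq p_A,
\end{equation}
whose minimizer, by Neyman--Pearson, is a lower level set of $f_1/f_0$.

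Second I would compute the ratio explicitly. Because only the first $m$ coordinates have different variances, the others cancel and
\begin{equation}
\frac{f_1(z)}{f_0(z)}=\frac{1}{k^m}\exp\!\left(\tfrac{1}{2}\Bigl(1-\tfrac{1}{k^2}\Bigr)T(z)\right),
\end{equation}
which is monotonically increasing in $T$ when $k\geq 1$ and decreasing when $k<1$. The Neyman--Pearson optimal set is therefore $\{T\leq t\}$ in the first case and $\{T\geq t\}$ in the second. Under $\epsilon_0$, $T\sim\chi^2_m$ by definition; under $\epsilon_1$, $T/k^2\sim\chi^2_m$ since each $\epsilon_{1,i}/\sigma_i$ equals in distribution $k$ times a standard normal, so $\epsilon_{1,i}^2/\sigma_i^2$ equals in distribution $k^2$ times the square of a standard normal.

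Plugging these distributions in, the constraint $\mathbb P(\epsilon_0\in S)=p_A$ fixes the threshold $t=F^{-1}_{\chi^2_m}(p_A)$ for $k\geq 1$ and $t=F^{-1}_{\chi^2_m}(1-p_A)$ for $k<1$; evaluating the $\epsilon_1$-probability of the same set then gives $F_{\chi^2_m}(t/k^2)$ or $1-F_{\chi^2_m}(t/k^2)$ respectively, which matches exactly the two cases of \cref{chi-square}. The only subtle point is tracking the direction of monotonicity of $f_1/f_0$ in $T$: the sign of $1-1/k^2$ flips between $k\geq 1$ and $k<1$, which in turn swaps ``lower level set'' with ``upper level set'' in the Neyman--Pearson step; once the correct level set is chosen, the remaining steps are routine $\chi^2$-quantile manipulations.
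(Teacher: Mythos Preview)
Your proposal is correct and follows essentially the same route as the paper's own proof: compute the likelihood ratio $f_1/f_0$, observe it depends monotonically on $T(z)=\sum_{i\le m} z_i^2/\sigma_i^2$ with the direction governed by the sign of $1-1/k^2$, apply Neyman--Pearson to identify the extremal set as a (super/sub)level set of $T$, and finish using $T\sim\chi^2_m$ under $\epsilon_0$ and $T/k^2\sim\chi^2_m$ under $\epsilon_1$. Your normalization constant $1/k^m$ is in fact the correct one (the paper writes $1/k^{2m}$, a harmless typo since the constant is irrelevant to the level-set argument).
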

\begin{proof}
We denote by $f_0$ and $f_1$ the probability density function of $\epsilon_0$ and $\epsilon_1$. We define a level function as $\Lambda(z) := \frac{f_1(z)}{f_0(z)}$. Suppose $\epsilon_0\sim \mathcal N(0, \Sigma)$ and $\epsilon_1 \sim \mathcal N(0,A^2\Sigma)$. Therefore, $A$ is a diagonal matrix with $A_{ii} = k$ if $i \leq m$ and $A_{ii} = 1$ otherwise.
\begin{align}
    \Lambda(z) &= \frac{f_1(z)}{f_0(z)}\\[0.5em]
    &= \frac{\left((2\pi)^n |A^2 \Sigma |\right)^{-1/2} \exp (-\frac{1}{2} (z^T (A^2 \Sigma)^{-1} z)) }{\left((2\pi)^n |\Sigma|\right)^{-1/2} \exp (-\frac{1}{2} z^T(\Sigma)^{-1} z)}\\[0.5em]
    &= \frac{\left((2\pi)^n \sigma^{2n} k^{2m}\right)^{-1/2} \exp (-\frac{1}{2} \sum_{i=1}^{m} \frac{z_i^2}{k^2\sigma_i^2})}{\left((2\pi)^n \sigma^{2n}\right)^{-1/2} \exp (-\frac{1}{2} \sum_{i=1}^{m}\frac{z_i^2}{\sigma_i^2})}\\[0.5em]
    &= \frac{1}{k^{2m}} \exp \left( \sum_{i=1}^{m}\frac{z_i^2}{2\sigma_i^2} \left(1-\frac{1}{k^2}\right)\right)
\end{align}
We define lower level sets as $S_t := \{z\in \mathcal Z: \Lambda(z)\leq t\}$. When $k > 1$, we can write the series of lowe level sets as
\begin{equation}
    S_t = \left\{z\ \Big|\ \sum_{i=1}^m \frac{z_i^2}{\sigma^2} \leq t\right\}, t \geq 0,
\end{equation}

while for $k < 1$,
\begin{equation}
    S_t = \left\{z\ \Big|\ \sum_{i=1}^m \frac{z_i^2}{\sigma^2}\geq t\right\}, t\geq 0. 
\end{equation}
By Neyman Pearson lemma, we can lower bound $q(y|x;\epsilon_1)$ by
\begin{equation}
    q(y|x;\epsilon_1)\geq \mathbb P_1(S_{t^*}),\ \text{where}\ \mathbb P_0(S_{t^*}) = q(y|x;\epsilon_0).
\end{equation}

We denote by $F_{\chi_m^2}(\cdot)$ the cumulative density function of a chi-square distribution with $m$ degree of freedom. Then,

\begin{equation}
    \mathbb P_0\Big(\sum_{i=1}^{m} \frac{z_i^2}{\sigma_i^2} \leq t^\prime \Big) = F_{\chi^2_m}(t^\prime)
\end{equation}
And 
\begin{equation}
    \mathbb P_1\Big(\sum_{i=1}^{m} \frac{z_i^2}{\sigma_i^2} \leq t^\prime \Big) = F_{\chi^2_m}(\frac{t^\prime}{k^2})
\end{equation}

Thus, for $k > 1$,
\begin{align}
    q(y|x;\epsilon_1) &\geq \mathbb P_1(S_{t^\prime})\\
    &= F_{\chi_m^2}(\frac{1}{k^2}F_{\chi_m^2}^{-1}(p_A)).
\end{align}
For $k < 1$, however, we have $\mathbb P_0(S_t) = 1 - F_{\chi_m^2}(t^\prime)$, so 
\begin{align}
    q(y|x;\epsilon_1) &\geq \mathbb P_1(S_{t^\prime})\\
    & = 1 - F_{\chi_m^2}(\frac{1}{k^2}F_{\chi_m^2}^{-1}(1 - p_A)).
\end{align}
\end{proof}

Remember that our goal is to estimate $\tilde p_A = q(y|x;\tilde E)$. Since its covariance matrix $\Sigma$ only has three unique eigenvalues, leveraging \cref{lemma:B1} three times helps address the class probability $\tilde p_A$.

\begin{lemma}
\label{lemma:pAtilde}
Suppose a classifier $g$ is smoothed by random linear transformations with random variable $E\sim \mathcal N(0,\sigma^2 I^9)$. If $p_A = q(y_A|x,E)$ and $\tilde p_A = q(y_A|x;\tilde E)$ with $\tilde E \sim \mathcal N(0,\mathrm{diag}(k_1^2I_3, k_2^2I_3,k_3^2I_3)\sigma^2).$ If $(k_1-1)^2 + (k_2-1)^2 + (k_3-1)^2 \leq R$, then
\begin{equation}
    \tilde p_A \geq \min\{p_1,p_2,p_3,p_4\}.
\end{equation}
where
\begin{equation}
    p_1 =  F_{\chi^2_3}\left(\frac{1}{(1+\frac{R}{\sqrt{3}})^6}F_{\chi_3^2}^{-1}(p_A)\right).
\end{equation}
\begin{equation}
    p_2 =  1 - \frac{1}{(1-\frac{R}{\sqrt{3}})^6} F_{\chi_3^2}^{-1}(1 - p_A).
\end{equation}
\begin{equation}
    p_3 = \inf_{r_1^2 + r_2^2 = R^2, r_1,r_2\geq 0} F_{\chi_3^2}\left(\frac{1}{(1+\frac{r_1}{\sqrt{2}})^4}F_{\chi_3^2}^{-1}\left(1 - F_{\chi_3^2}\left(\frac{1}{(1-r_2)^2} F_{\chi_3^2}^{-1}(1-p_A)\right) \right) \right)
\end{equation}
\begin{equation}
    p_4 = \inf_{r_1^2 + r_2^2 = R^2, r_1,r_2\geq 0} 1 - F_{\chi_3^2}\left(\frac{1}{(1-\frac{r_1}{\sqrt{2}})^4}F_{\chi_3^2}^{-1}\left(1 - F_{\chi_3^2}\left(\frac{1}{(1+r_2)^2} F_{\chi_3^2}^{-1}(p_A)\right) \right) \right)
\end{equation}
\end{lemma}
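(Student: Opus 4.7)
The plan is to walk from the isotropic Gaussian $E$ to the block-anisotropic Gaussian $\tilde E$ through three intermediate Gaussian distributions, changing the variance of one $3$-coordinate block at a time, and to chain Lemma~\ref{lemma:B1} (with $m=3$) across the three transitions. Concretely, at stage $s \in \{1,2,3\}$ the first $s$ blocks have been rescaled from variance $\sigma^2$ to $k_s^2 \sigma^2$ while the remaining $3-s$ blocks still have variance $\sigma^2$. At each stage only one $3$-dimensional block changes, so Lemma~\ref{lemma:B1} applies directly with $m=3$: for $k_s \geq 1$ we obtain a sandwich of the form $F_{\chi_3^2}(k_s^{-2} F_{\chi_3^2}^{-1}(\cdot))$, and for $k_s < 1$ the complementary form $1 - F_{\chi_3^2}(k_s^{-2} F_{\chi_3^2}^{-1}(1-\cdot))$.

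Next, I would use the monotonicity of $F_{\chi_3^2}$, of $F_{\chi_3^2}^{-1}$, and of the affine scaling $x \mapsto x / k_s^2$ to propagate the lower bound through the three successive applications, feeding the output of one sandwich as the ``$p_A$'' of the next. When all three $k_i \geq 1$ the three sandwiches nest into $F_{\chi_3^2}\!\bigl(\prod_i k_i^{-2}\, F_{\chi_3^2}^{-1}(p_A)\bigr)$; when all three are $\leq 1$ they nest into the complementary expression with $1 - p_A$; and in the two mixed cases the two flavors of sandwich interleave, producing precisely the hybrid expressions that appear inside $p_3$ and $p_4$.

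I would then carry out a worst-case analysis over the allowed triples $(k_1, k_2, k_3)$ subject to the quadratic constraint (read as $\sum_i (k_i -1)^2 \leq R^2$). Within each sub-case the composed lower bound is monotone decreasing in each $|k_i - 1|$, so by a symmetry/Lagrangian argument the worst case spreads the squared deviation evenly among blocks of the same regime. In the all-$\geq 1$ case this forces $k_i = 1 + R/\sqrt{3}$ and yields $p_1$; in the all-$\leq 1$ case it forces $k_i = 1 - R/\sqrt{3}$ and yields $p_2$; in the mixed cases the split $r_1^2 + r_2^2 = R^2$ of the total squared deviation between the positive-side blocks (spread over $2$ or $1$ blocks, giving the $(1+r_1/\sqrt 2)^4$ or $(1+r_2)^2$ factor) and the negative-side blocks remains a free parameter, explaining the $\inf$ in $p_3$ and $p_4$. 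Since every admissible $(k_1, k_2, k_3)$ falls into one of the four sub-cases, its Lemma~\ref{lemma:B1}-derived lower bound is at least the corresponding $p_j$, so $\tilde p_A \geq \min\{p_1, p_2, p_3, p_4\}$.

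The main obstacle I expect is justifying the iterated use of Lemma~\ref{lemma:B1}: each application delivers only a lower bound, and substituting that bound into the next stage must preserve validity. This reduces to checking that the composed maps $p \mapsto F_{\chi_3^2}(k^{-2} F_{\chi_3^2}^{-1}(p))$ and $p \mapsto 1 - F_{\chi_3^2}(k^{-2} F_{\chi_3^2}^{-1}(1-p))$ are non-decreasing in $p$, which follows from composition of non-decreasing maps. A subtler point is verifying that when rescaling the $s$-th block, the already-rescaled blocks and the untouched blocks do not affect the Neyman--Pearson level-set computation of Lemma~\ref{lemma:B1}: the joint density factorizes as a product across blocks, so the likelihood ratio $f_1/f_0$ depends only on the coordinates of the block currently being modified, and the integration over the other coordinates passes through cleanly. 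Finally, the worst-case optimization in the mixed cases must be phrased so that the $\inf$ is taken before the outer monotone maps, which is consistent with the stated form of $p_3, p_4$.
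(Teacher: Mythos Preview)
Your proposal is correct and follows essentially the same route as the paper: iterate Lemma~\ref{lemma:B1} (with $m=3$) three times to pass from $E$ to $\tilde E$ block by block, split into the four sign-pattern cases for $(k_1,k_2,k_3)$, and then optimize over the constraint $\sum_i (k_i-1)^2 \le R^2$ within each case to obtain $p_1,\dots,p_4$. Your treatment is in fact slightly more careful than the paper's, which tacitly uses the monotonicity needed to chain the successive lower bounds and the block-factorization that lets Lemma~\ref{lemma:B1} be reapplied after earlier blocks have already been rescaled.
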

\begin{proof}
The four probabilities $p_1,p_2,p_3,p_4$ corresponds to four different cases for $k_1,k_2,k_3$.

\begin{itemize}
\item $k_1,k_2,k_3 \geq 1$. In this case 
\begin{equation}
    F_{\chi_3^2}^{-1}(\tilde p_A) = \frac{1}{k_1^2k_2^2k_3^2}F_{\chi_3^2}^{-1}(p_A).
\end{equation}
Thus,
\begin{equation}
    \inf_{k_1,k_2,k_3\geq 1}\tilde p_A = F_{\chi^2_3}\left(\frac{1}{(1+\frac{R}{\sqrt{3}})^6}F_{\chi_3^2}^{-1}(p_A)\right).
\end{equation}
\item $k_1,k_2,k_3 < 1$. In this case
\begin{equation}
    F_{\chi_3^2}^{-1}(1- \tilde p_A) = \frac{1}{k_1^2k_2^2k_3^2}F_{\chi_3^2}^{-1}(1 - p_A) \leq \frac{1}{(1-\frac{R}{\sqrt{3}})^6} F_{\chi_3^2}^{-1}(1 - p_A).
\end{equation}
Thus,
\begin{equation}
    \inf_{k_1,k_2,k_3<1}\tilde p_A = 1 - F_{\chi_3^2}\left(\frac{1}{(1-\frac{R}{\sqrt{3}})^6}F_{\chi_3^2}^{-1}(1 - p_A)\right)
\end{equation}

\item $k_1,k_2 \geq 1, k_3 < 1$. Let $r_1 = \sqrt{(k_1-1)^2 + (k_2-2)^2}$ and $r_2 = 1 - k_3$.
\begin{equation}
    \tilde p_A = 
    F_{\chi_3^2}\Big(\frac{1}{k_1^2k_2^2}F_{\chi_3^2}^{-1}(1-p_A^\prime)\Big) \geq F_{\chi_{3}^2}\Big(\frac{1}{(1+\frac{r_1}{\sqrt 2})^4}F_{\chi_3^2}^{-1}(1 - p_A^\prime)\Big).
\end{equation}
where $p_A^\prime = 1 - F_{\chi_3^2}(\frac{1}{k_3^2} F_{\chi_3^2}^{-1}(1-p_A))$. Hence,
\begin{equation}
\inf_{k_1,k_2\geq 1,k_3<1}\tilde p_A = \inf_{r_1^2 + r_2^2 = R^2,r_1,r_2\geq 0} F_{\chi_3^2}\left(\frac{1}{(1+\frac{r_1}{\sqrt{2}})^4}F_{\chi_3^2}^{-1}\left(1 - F_{\chi_3^2}\left(\frac{1}{(1-r_2)^2} F_{\chi_3^2}^{-1}(1-p_A)\right) \right) \right).
\end{equation}
\item $k_1,k_2 < 1$ and $k_3\geq 1$. Let $r_1 = \sqrt{(1-k_1)^2 + (1-k_2)^2}$ and $r_2 = k_3 - 1$, so
\begin{equation}
    \tilde p_A = 1 - F_{\chi_3^2}\Big(\frac{1}{k_1^2k_2^2}F_{\chi_3^2}^{-1}(1-p_A^\prime)\Big)\geq 1 - F_{\chi_3^2}\Big(\frac{1}{(1-\frac{r_1}{\sqrt 2})^4}F_{\chi_3^2}^{-1}(1-p_A^\prime)\Big).
\end{equation}
where $p_A^\prime=F_{\chi_3^2}\Big(\frac{1}{k_3^2} F_{\chi_3^2}^{-1}(p_A)\Big)$.
\begin{equation}
    \inf_{k_1,k_2<1,k_3\geq 1} \tilde p_A =  \inf_{r_1^2 + r_2^2 = R^2, r_1,r_2\geq 0} 1 - F_{\chi_3^2}\left(\frac{1}{(1-\frac{r_1}{\sqrt{2}})^4}F_{\chi_3^2}^{-1}\left(1 - F_{\chi_3^2}\left(\frac{1}{(1+r_2)^2} F_{\chi_3^2}^{-1}(p_A)\right) \right) \right)
\end{equation}
\end{itemize}
As a result, $\tilde p_A \geq \inf_{k_1,k_2,k_3}\tilde p_A = \min\{p_1,p_2,p_3,p_4\}$.
\end{proof}

\cref{lemma:pAtilde} bridge $\tilde p_A$ with $p_A$ with a functional relation. However, the infimum used in the definition of $p_3$ and $p_4$ makes them hard to compute. Fortunately, we can draw samples in $\{(r_1,r_2)|r_1^2 + r_2^2= R^2, r_1,r_2\geq 0\}$ and calculate lower bounds for $p_3$ and $p_4$. Let $r_1 = R\cos\theta $ and $r_2 = R\sin \theta$ where $\theta \in [0,\frac{\pi}{2}]$. Suppose minimized functions are $L_3$ and $L_4$-Lipschitz in terms of $\theta$, respectively. If we sample $\theta_i = (i/\epsilon) \cdot \frac{\pi}{2}$, then $p_3 \geq \min_{i} \mathcal F_3(\theta_i,p_A) - \frac{\epsilon L_3}{2}$ and $p_4 \geq \min_i \mathcal F_4(\theta_i,p_A) - \frac{\epsilon L_4}{2}$. By taking partial derivative on $\mathcal F_3$ and $\mathcal F_4$, the Lipschitz constants are bounded by
\begin{align}
    &L_3 \leq \frac{2Ru}{\sqrt{\pi e}} + \frac{\sqrt{2}R F_{\chi_3^2}^{-1}(1-p_A) }{\sqrt{\pi} (1-R)^3 \sqrt{u} e^{-\frac{u}{2}+1}}.\\
    &L_4 \leq \frac{\sqrt{2}Ru}{\sqrt{\pi e}} + \frac{2R F_{\chi_3^2}^{-1}(1-p_A)}{\sqrt{\pi} (1-\frac{R}{\sqrt{2}})^5 \sqrt{u}e^{-\frac{u}{2}+1}\cdot}, \text{ where } u = F_{\chi_3^2}^{-1}(p_A).
\end{align}

\subsection{Proof of \cref{thm:taper+rotation}: Certifying Z-taper $\circ$ Z-rotation}
\label{appendix:taper+rotation}
\emph{Proof.} Consider the composite transformation $\phi_{TP} \circ \phi_{Rot-z}$ with parameter space $\mathcal Z = \mathcal Z_{TP}\times \mathcal Z_{Rot-z} = \mathbb R^2$. As stated in \cref{thm:taper+rotation}, we sample $\varphi \theta M^2$ parameters $z_{jk} = (\varphi_j,\theta_k)$ in the subspace $S = [-\varphi, \varphi]\times [-\theta, \theta]\subseteq \mathcal Z$, with $\varphi_j = \frac{2j}{M} - \varphi$ and $\theta_k = \frac{2k}{M} - \theta$. The interpolation error of a point cloud $x = \{p_i\}_{i=1}^N$ ($p_i = x_i,y_i,z_i$) between two transformations $z_{jk} = (\varphi_j,\theta_k)$ and $z^\prime = (\varphi^\prime, \theta^\prime)$ is:
\begin{align}
    \mathcal M(z_{jk},z^\prime) &= \left(\sum_{i=1}^N\|\phi_{TP}(\phi_{Rot-z}(p_i,\theta_k),\varphi_j) - \phi_{TP}( \phi_{Rot-z}(p_i,\theta^\prime),\varphi^\prime)\|_2^2 \right)^{1/2}\\
    &= \left(\sum_{i=1}^N\|\phi_{TP}(p_i^\prime,\varphi_j) -\phi_{TP}(\phi_{Rot-z}(p_i^\prime,\theta^\prime-\theta_k),\varphi^\prime)\|_2^2 \right)^{1/2}\ \text{where } p_i^\prime = \phi_{Rot-z}(p_i,\theta_k)\\
    &= \left(\sum_{i=1}^N \Big[(1+\varphi_j z_i^\prime)^2 r_i^{\prime2} + (1+\varphi^\prime z_i^\prime)^2 r_i^{\prime 2} - 2(1+\varphi_j z_i^\prime)(1+\varphi^\prime z_i^\prime)r_i^{\prime 2} \cos(\theta^\prime - \theta_k)\Big] \right)^{1/2} \quad (r_i^{\prime 2} = x_i^{\prime 2} + y_i^{\prime 2})\label{eq:65}\\
    &\leq \left(\sum_{i=1}^N \Big[(\varphi^\prime - \varphi_j)^2 z_i^{\prime2} r_i^{\prime2} + (\theta^\prime - \theta_k)^2r_i^{\prime2}(1+\varphi_j z_i^\prime)(1+\varphi^\prime z_i^\prime)\Big]\right)^{1/2}
\end{align}

\cref{eq:65} uses the law of cosine to compute the $\ell_2$ distance. Note that $\max_{\theta^\prime}\min_{k}|\theta^\prime - \theta_k| = \frac{1}{M}$ and $\max_{\varphi^\prime}\min_j|\varphi^\prime - \varphi_j| = \frac{1}{M}$. Also, $z_i^{\prime2} r_i^{\prime2} \leq \frac{1}{4}$ for $z_i^{\prime2} + r_i^{\prime2} \leq 1$. Therefore, the interpolation error
\begin{align}
    \mathcal M_S &= \max_{z=(\varphi^\prime, \theta^\prime)\in S}\min_{j,k} \mathcal M(z_{jk}, z^\prime)\\
    &\leq \left(\sum_{i=1}^N \big(\frac{1}{4M^2} + \frac{(1+\varphi)^2}{M^2}\big)\right)^{1/2}\\
    &= \frac{\sqrt{N(4\varphi^2 + 8\varphi + 5)}}{2M}
\end{align}
It thus follows from \cref{thm:TSS-DR} that for any $z\in S$, $y_A = \arg\max_y q(y|\phi(x,z);\epsilon)$; if $\ \forall j,k:$
\begin{equation}
    \frac{\sigma}{2}\Big(\Phi^{-1}\Big(p_A^{(jk)}\Big) - \Phi^{-1}\Big(p_B^{(jk)}\Big) \Big) \geq \frac{\sqrt{N(4\varphi^2 + 8\varphi + 5)}}{2M}.
\end{equation}

\subsection{Proof of \cref{thm:twist+taper+rotation}: Certifying Z-twist $\circ$ Z-taper $\circ$ Z-rotation}
\label{appendix:twist+taper+rotation}
\emph{Proof.} The composite transformation $\phi_{Tz} \circ \phi_{TP} \circ \phi_{Rot-z}$ has a parameter space of $\mathcal Z = \mathcal Z_{Twist} \times \mathcal Z_{Taper} \times \mathcal Z_{Rot-z} = \mathbb R^3$. We calculate the interpolation error of a point cloud $x = \{p_i\}_{i=1}^N$ between two transformations $z_{jkl} = (\varphi_j, \alpha_k, \theta_l)$ and $z^\prime = (\varphi^\prime, \alpha^\prime, \theta^\prime)$. (Note that z-twist, z-taper and z-rotation are pairwise commutative.)
\begin{align}
    \mathcal M(z^\prime, z_{jkl}) &= \| \phi(x,z_{jkl}) - \phi(x,z^\prime) \|_2 \\
    &= \left(\sum_{i=1}^N\|\phi(p_i,z_{jkl})-\phi(p_i,z^\prime)\|_2^2 \right)^{1/2}\\
    &= \left(\sum_{i=1}^N\|  
    \phi_{TP}(p_i^\prime, \alpha_k) - \phi_{TP}(\phi_{Tz}(\phi_{Rot-z}(p_i^\prime,\theta^\prime - \theta_l),\varphi^\prime - \varphi_j),\alpha^\prime)
    \|_2^2 \right)^{1/2}\\
    &= \left(\sum_{i=1}^N \left[(1+\alpha_k z_i^\prime)^2 r_i^{\prime2} + (1+\alpha^\prime z_i^\prime)^2 r_i^{\prime2} - 2(1+\alpha_k z_i^\prime)(1+\alpha^\prime z_i^\prime)r_i^{\prime 2} \cos ((\varphi^\prime - \varphi_j)z_i^\prime + \theta^\prime - \theta_l) \right] \right)^{1/2}\label{eq:74}\\
    &\leq \left(\sum_{i=1}^N \left[(\alpha_k - \alpha^\prime)^2 z_i^{\prime2}r_i^{\prime2} + (1+\alpha_k z_i^\prime)(1+\alpha^\prime z_i^\prime)((\varphi^\prime - \varphi_j)z_i^\prime + \theta^\prime - \theta_l)^2 r_i^{\prime2}\right] \right)^{1/2}
\end{align}

\cref{eq:74} uses the law of cosine for computing the $\ell_2$ distance. Following the sampling strategy, for $z^\prime = (\varphi^\prime, \alpha^\prime, \theta^\prime)\in S = [-\varphi,\varphi, - \alpha, \alpha, -\theta, \theta]$, we have $\max_{\varphi^\prime}\min_j |\varphi^\prime - \varphi_j| = \frac{1}{M}$, $\max_{\alpha^\prime}\min_k |\alpha^\prime - \alpha_k| = \frac{1}{M}$ and $\max_{\theta^\prime}\min_{l}|\theta^\prime - \theta_l| = \frac{1}{M}$. Hence, the maximum interpolation error for the subspace $S$ is
\begin{align}
    \mathcal M_S &= \max_{z^\prime \in S}\min_{j,k,l} \mathcal M(z^\prime, z_{jkl})\\
    &\leq \left(\sum_{i=1}^N \left[ \frac{z_i^{\prime2}r_i^{\prime2}}{M^2} + \frac{(1+\alpha z_i^\prime)^2 (1+z_i^\prime)^2 r_i^{\prime2}}{M^2} \right]\right)^{1/2}\\
    &\leq \left(\sum_{i=1}^N \Big(\frac{1}{4M^2} + \frac{(1+\alpha)^2\times \frac{27}{16}}{M^2}\Big)\right) = \frac{\sqrt{N(1+\frac{27}{4}(1+\alpha^2))}}{2M}
\end{align}

Applying \cref{thm:TSS-DR}, it is guaranteed that for any $z\in S$, $y_A = \arg\max_y(q|\phi(x,z);\epsilon), $ if $\ \forall j,k,l$,
\begin{equation}
    \frac{\sigma}{2}\Big(\Phi^{-1}\Big(p_A^{(jkl)}\Big) - \Phi^{-1}\Big(p_B^{(jkl)}\Big) \Big) \geq \frac{\sqrt{N(1 + \frac{27}{4}(1+\alpha)^2)}}{2M}
\end{equation}

\section{Discussion on $\ell_p$ Norm Bounded Perturbations}
\label{appendix:ellp}
We exhibit the certified robust accuracy under attacks with restricted $\ell_p$ norm in \cref{table:ell_p}. Our \method framework is directly applicable for certifying $\ell_2$ norm bounded attacks. We smooth a base classifier by additive noise $\epsilon\sim \mathcal N(0,\mathds 1_{3\times N})$ so its class probability is $q(y|x;\epsilon) = \mathbb E_{\epsilon} p(y|x+\epsilon)$. Since additive noise is a additive transformation, the smoothed classifier must be robust for any attacks $\alpha\in \mathbb R^{3\times N}$ with
\begin{equation}
    \|\alpha\|_2 \leq \frac{\sigma}{2}\Big(\Phi^{-1}\Big(p_A\Big) - \Phi^{-1}\Big(p_B\Big)\Big).
\end{equation}

Though our \method framework cannot directly be applied to certify against $\ell_\infty$ norm bounded perturbations, we can still derive a certification bound for point clouds $x\in \mathbb R^{3\times N}$ by a loose relaxation $\|\theta\|_\infty \leq \sqrt{3N}\|\theta\|_2$. In fact, this certification bound for $\ell_\infty$ is the best we can get in terms of dimension dependence, as \cite{yang2020randomized} pointed out that no smoothing techniques can certify nontrivial accuracy within a radius of $\Omega(N^{-1/2})$. However, this relaxation is too imprecise when applying it in a high-dimensional space. As a result, the certified accuracy for $\ell_\infty$ norm drops as the point cloud size increases.

\begin{table}[t]
\centering
\caption{Certified robustness for point cloud models under different $\ell_p$ attacks. We achieve similar certification bound for $\ell_\infty$ norm bounded attack as DeepG3D \cite{DeepG3D}.}
\vskip 0.15in
\label{table:ell_p}
\begin{small}
    \begin{tabular}{cccccccc}
    \toprule
    \multirow{2}{*}{Attack} & \multirow{2}{*}{Radius} & \multicolumn{3}{c}{\method} & &\multicolumn{2}{c}{DeepG3D}  \\
    \cline{3-5} \cline{7-8}\\[-7pt]
    & & 16 & 64 & 256 & & 64 & 256\\
    \midrule
    $\ell_2$ & 0.05 & 74.1 & 82.2 & \textbf{84.2} & &-&-\\
    $\ell_2$ & 0.1 & 61.9 & 70.8 & \textbf{77.3} & &-&- \\
    $\ell_\infty$ & 0.01 & \textbf{70.9} & 64.4 & 47.0 & & \textbf{70.9} & 67.0\\
    \bottomrule
    \end{tabular}
    \end{small}
\vskip -0.1in
\end{table}

\section{Implementation Details}

In the implementation of \method, we need to figure out $p_A$, $p_B$~(for additive transformations), or $p_A^{(i)}$, $p_B^{(i)}$ for other transformations.
Following convention~\citep{cohen19c}, we set $p_B = 1 - p_A$ which gives an upper bound of true $p_B$ and thus resulting in a sound relaxation.
To figure out $p_A$, we use Monte-Carlo sampling and Clopper-Pearson confidence interval~\citep{clopper1934use} to obtain a high-confidence lower bound~(denoted as $\underline{p_A}$) of $p_A$, which implies a high-confidence robustness certification.
Specifically, we sample $N=10^3$ times for additive transformations and set confidence level $1-\alpha = 99.9\%$.
Regarding other transformations, to guarantee that overall certification holds with confidence level $99.9\%$, suppose there are $M$ samples of $z_j$, then for each sampled parameter $z_j$, the $p_A^{(j)}$ estimation uses $N=10^4$ samples with confidence level $(1-\alpha/M)$.


\section{More Experimental Details}

\subsection{Runtime}

One drawback of randomized smoothing techniques is their extra computation overheads for certification. In our implementation, all transformations are processed in batch to accelerate computation. We report the average inference time of \method for each point cloud input in \cref{table:runtime}

\begin{table}[t]
\vspace{-0.7em}
\centering
\caption{\small Inference time of TPC for different transformations}
\label{table:runtime}
\begin{tabular}{llc}
\toprule
    Transformation & Attack radius & Inference time ($\mathrm {ms}$)\\
    \midrule
    General rotation & 15$^\circ$ & 9.56\\
    Z-rotation & 180$^\circ$ & 3.12\\
    Z-shear & 0.2 & 3.46\\
    Z-twist & 180$^\circ$ & 3.76\\
    Z-taper & 0.5 & 10.61 \\
    Linear & 0.2 & 3.94 \\
    Z-twist $\circ$ Z-rotation & 50$^\circ$, 5$^\circ$& 5.34\\
    Z-taper $\circ$ Z-rotation & 0.2, 1$^\circ$ & 8.72\\
    Z-twist $\circ$ Z-taper $\circ$ Z-rotation & 20$^\circ$, 0.2, 1$^\circ$ & 9.40\\
    \bottomrule
\end{tabular}
\end{table}

\subsection{Certified Ratio}
\label{appendix:certified-ratio}

The certified ratio is defined as the fraction of test point clouds classified \emph{consistently}, but not necessarily \emph{correctly} under a set of attacks. We compare the certified ratio achieved by our \method method with the baseline, DeepG3D \cite{DeepG3D} in \cref{table:ratio}.
\begin{table}[!t]
    \centering
    \caption{Comparison of certified ratio achieved by our transformation-specific smoothing framework \method and the baseline, DeepG3D \cite{DeepG3D}. ``-" denotes the settings where the baselines cannot scale up to.}
    \label{table:ratio}
    \vskip 0.15in
    \begin{small}
    \begin{tabular}{llcc}
    \toprule
    \multirow{2}{*}[-2pt]{Transformation} & \multirow{2}{*}[-2pt]{Attack radius} & \multicolumn{2}{c}{Certified Ratio ($\%$)} \\[1pt] \cline{3-4} \\[-7pt]
    &           & TPC &  DeepG3D   \\
    \midrule
    \multirow{2}{*}{ZYX-rotation} & $2^\circ$ & \textbf{92.6} & 72.8\\
    & $5^\circ$ & \textbf{79.5} & 58.7 \\
    \midrule
    \multirow{3}{*}{General rotation} & $5^\circ$ & \textbf{89.4} & - \\
    & $10^\circ$ & \textbf{79.5} & -  \\
    & $15^\circ$ & \textbf{63.1} & -  \\ 
    \midrule
    \multirow{3}{*}{Z-rotation} & $20^\circ$ & \textbf{99.0}  & 96.7\\
    & $60^\circ$ & \textbf{98.1}  & 95.7 \\
    & $180^\circ$& \textbf{95.2} & - \\
    \midrule
    \multirow{3}{*}{Z-shear} & 0.03 & \textbf{98.6} &  70.7 \\
    &  0.1  & \textbf{97.1} & - \\
    & 0.2 & \textbf{91.8} & - \\ 
    \midrule
    \multirow{3}{*}{Z-twist} & $20^\circ$ & \textbf{100.0} & 23.9 \\
    & $60^\circ$ & \textbf{95.6} & - \\
    & $180^\circ$ & \textbf{77.5} & - \\
    \midrule
    \multirow{3}{*}{Z-taper} & 0.1 & \textbf{95.2} & 81.5 \\
    & 0.2 & \textbf{93.3} & 28.3 \\
    & 0.5 & \textbf{91.2} & - \\ 
    \midrule
    \multirow{3}{*}{\makecell[l]{Z-twist $\circ$\\Z-rotation}} & $20^\circ$, $1^\circ$& \textbf{96.5} & 16.3 \\
    & $20^\circ$, $5^\circ$& \textbf{96.0} & - \\
    & $50^\circ$, $5^\circ$& \textbf{95.0} & -  \\ 
    \midrule
    \multirow{2}{*}{\makecell[l]{Z-taper $\circ$\\Z-rotation}}
    & 0.1, $1^\circ$ & \textbf{89.5} & 68.5  \\
    & 0.2, $1^\circ$ & \textbf{86.1} & 20.7\\ 
    \midrule
    \multirow{2}{*}{\makecell[l]{Z-twist $\circ$ Z-taper\\$\circ$ Z-rotation}} & $10^\circ$, 0.1, $1^\circ$& \textbf{74.9} & 20.7 \\
    &$20^\circ$, 0.2, $1^\circ$ & \textbf{68.7} & 5.4\\
    \bottomrule
    \end{tabular}
    \end{small}
    \vskip -0.1in
\end{table}

\subsection{Evaluation of other architectures}
\label{appendix:curvenet}
In \cref{sec:exp} of the main text, we only discuss one particular architecture for the point cloud model for clarity of comparing with the baseline \cite{DeepG3D}. However, \method is model-agnostic and can be directly applied to other architectures. We also conduct experiments on a more complicated architecture, CurveNet \cite{curvenet} to show the flexibility and scalability of \method. The certified robust accuracy for CurveNet is shown in \cref{table:curvenet}.

\begin{table}[!h]
    \vspace{-0.5em}
\caption{\small Certified robust accuracy of TPC on CurveNet}
\label{table:curvenet}
    \centering
    \begin{tabular}{llcc}
    \toprule
    \multirow{2}{*}[-2pt]{Transformation} & \multirow{2}{*}[-2pt]{Attack radius} & \multicolumn{2}{c}{Certified Accuracy ($\%$)} \\[1pt] \cline{3-4}\\[-10pt]
    &           & PointNet&  CurveNet \\
    \midrule
    Z-rotation  & 180$^\circ$ & 81.3& \textbf{85.4}\\
    Z-shear  & 0.2  & 77.7 & \textbf{87.8}\\
    Z-twist & 180$^\circ$ & 64.3 & \textbf{86.2}\\
    Z-taper & 0.2 &  76.5 & \textbf{88.6}\\
    Linear & 0.2 & 59.9 & \textbf{77.7} \\
    \bottomrule
    \end{tabular}
    \label{table2}
\end{table}


\end{document}